\newtheorem{theorem}{Theorem}[section]
\newenvironment{proof}[1][Proof]{\begin{trivlist}
		\item[\hskip \labelsep {\bfseries #1}]}{\end{trivlist}}
\newcommand{\qed}{\nobreak \ifvmode \relax \else
	\ifdim\lastskip<1.5em \hskip-\lastskip
	\hskip1.5em plus0em minus0.5em \fi \nobreak
	\vrule height0.75em width0.5em depth0.25em\fi}
\begin{document}
%
% paper title
% Titles are generally capitalized except for words such as a, an, and, as,
% at, but, by, for, in, nor, of, on, or, the, to and up, which are usually
% not capitalized unless they are the first or last word of the title.
% Linebreaks \\ can be used within to get better formatting as desired.
% Do not put math or special symbols in the title.
\title{Segmentation Rectification for Video Cutout \\via One-Class Structured Learning}
%
%
% author names and IEEE memberships
% note positions of commas and nonbreaking spaces ( ~ ) LaTeX will not break
% a structure at a ~ so this keeps an author's name from being broken across
% two lines.
% use \thanks{} to gain access to the first footnote area
% a separate \thanks must be used for each paragraph as LaTeX2e's \thanks
% was not built to handle multiple paragraphs
%
%
%\IEEEcompsocitemizethanks is a special \thanks that produces the bulleted
% lists the Computer Society journals use for "first footnote" author
% affiliations. Use \IEEEcompsocthanksitem which works much like \item
% for each affiliation group. When not in compsoc mode,
% \IEEEcompsocitemizethanks becomes like \thanks and
% \IEEEcompsocthanksitem becomes a line break with idention. This
% facilitates dual compilation, although admittedly the differences in the
% desired content of \author between the different types of papers makes a
% one-size-fits-all approach a daunting prospect. For instance, compsoc 
% journal papers have the author affiliations above the "Manuscript
% received ..."  text while in non-compsoc journals this is reversed. Sigh.

\author{Junyan~Wang, \IEEEmembership{Member,~IEEE,}
        Sai-Kit~Yeung, Jue~Wang,~\IEEEmembership{Senior Member,~IEEE,} and\\
        Kun~Zhou,~\IEEEmembership{Fellow,~IEEE}% <-this % stops a space
\IEEEcompsocitemizethanks{\IEEEcompsocthanksitem Junyan Wang is with Doheny Eye Institute at University of California, Los Angeles, CA 90033, USA\protect\\
E-mail: {ejywang@ucla.edu}
\IEEEcompsocthanksitem Sai-Kit Yeung is with the
Pillar of Information Systems Technology and Design, Singapore University of Technology and Design, Singapore, 487372.\protect\\
E-mail: e-mail:saikit@sutd.edu.sg
\IEEEcompsocthanksitem Jue Wang is with Adobe Research, Seattle, WA 98103, USA.\protect\\
E-mail: e-mail:juewang@ieee.org
\IEEEcompsocthanksitem Kun Zhun is with the State Key Laboratory of CAD\&CG, Zhejiang
University, Hangzhou, China, 310058.\protect\\
E-mail: kunzhou@acm.org.
}% <-this % stops an unwanted space
\thanks{Manuscript received ; revised }}

% note the % following the last \IEEEmembership and also \thanks - 
% these prevent an unwanted space from occurring between the last author name
% and the end of the author line. i.e., if you had this:
% 
% \author{....lastname \thanks{...} \thanks{...} }
%                     ^------------^------------^----Do not want these spaces!
%
% a space would be appended to the last name and could cause every name on that
% line to be shifted left slightly. This is one of those "LaTeX things". For
% instance, "\textbf{A} \textbf{B}" will typeset as "A B" not "AB". To get
% "AB" then you have to do: "\textbf{A}\textbf{B}"
% \thanks is no different in this regard, so shield the last } of each \thanks
% that ends a line with a % and do not let a space in before the next \thanks.
% Spaces after \IEEEmembership other than the last one are OK (and needed) as
% you are supposed to have spaces between the names. For what it is worth,
% this is a minor point as most people would not even notice if the said evil
% space somehow managed to creep in.

% The paper headers
\markboth{}%
{Shell \MakeLowercase{\textit{et al.}}: Bare Demo of IEEEtran.cls for Computer Society Journals}
% The only time the second header will appear is for the odd numbered pages
% after the title page when using the twoside option.
% 
% *** Note that you probably will NOT want to include the author's ***
% *** name in the headers of peer review papers.                   ***
% You can use \ifCLASSOPTIONpeerreview for conditional compilation here if
% you desire.

% The publisher's ID mark at the bottom of the page is less important with
% Computer Society journal papers as those publications place the marks
% outside of the main text columns and, therefore, unlike regular IEEE
% journals, the available text space is not reduced by their presence.
% If you want to put a publisher's ID mark on the page you can do it like
% this:
%\IEEEpubid{0000--0000/00\$00.00~\copyright~2014 IEEE}
% or like this to get the Computer Society new two part style.
%\IEEEpubid{\makebox[\columnwidth]{\hfill 0000--0000/00/\$00.00~\copyright~2014 IEEE}%
%\hspace{\columnsep}\makebox[\columnwidth]{Published by the IEEE Computer Society\hfill}}
% Remember, if you use this you must call \IEEEpubidadjcol in the second
% column for its text to clear the IEEEpubid mark (Computer Society jorunal
% papers don't need this extra clearance.)

% use for special paper notices
%\IEEEspecialpapernotice{(Invited Paper)}

% for Computer Society papers, we must declare the abstract and index terms
% PRIOR to the title within the \IEEEtitleabstractindextext IEEEtran
% command as these need to go into the title area created by \maketitle.
% As a general rule, do not put math, special symbols or citations
% in the abstract or keywords.
\IEEEtitleabstractindextext{%
\begin{abstract}
Recent works on interactive video object cutout mainly focus on designing dynamic foreground-background (FB) classifiers for segmentation propagation. However, the research on optimally removing errors from the FB classification is sparse, and the errors often accumulate rapidly, causing significant errors in the propagated frames. In this work, we take the initial steps to addressing this problem, and we call this new task \emph{segmentation rectification}. Our key observation is that the possibly asymmetrically distributed false positive and false negative errors were handled equally in the conventional methods. We, alternatively, propose to optimally remove these two types of errors. To this effect, we propose a novel bilayer Markov Random Field (MRF) model for this new task. We also adopt the well-established structured learning framework to learn the optimal model from data. Additionally, we propose a novel one-class structured SVM (OSSVM) which greatly speeds up the structured learning process. Our method naturally extends to RGB-D videos as well. Comprehensive experiments on both RGB and RGB-D data demonstrate that our simple and effective method significantly outperforms the segmentation propagation methods adopted in the state-of-the-art video cutout systems, and the results also suggest the potential usefulness of our method in image cutout system.\\

{\centering
		\includegraphics[height=3cm]{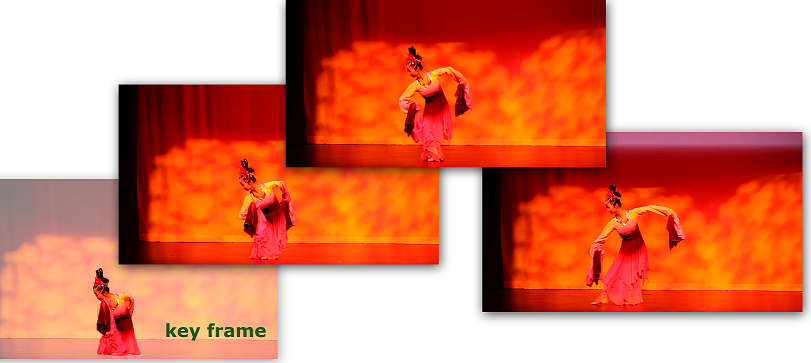}
		\includegraphics[height=3cm]{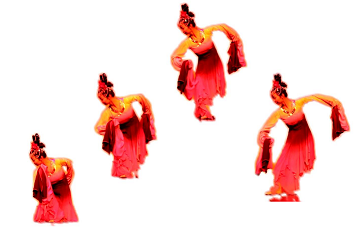}
		\includegraphics[height=3cm]{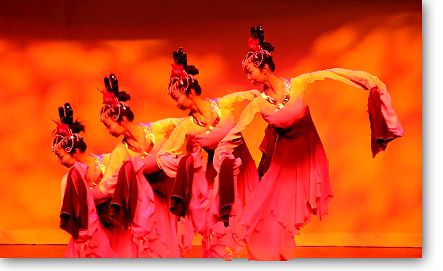}\setcounter{figure}{-1}    
		\captionof{figure}{Given a keyframe segmentation provided by the user (left), our approach generates accurate object cutout results in subsequent frames fully automatically (middle), which can be used for creating a novel compositing (right).}
		\par
}
\end{abstract}

% Note that keywords are not normally used for peerreview papers.
\begin{IEEEkeywords}
Video cutout, segmentation rectification, one-class structured SVM, object segmentation
\end{IEEEkeywords}}

% make the title area
\maketitle
\IEEEpeerreviewmaketitle

\IEEEraisesectionheading{\section{Introduction}\label{sec:introduction}}

\IEEEPARstart{V}{ideo} cutout, as one of the most successful applications of computer vision for video editing and compositing, has gained much attention from the computer graphics community~\cite{chuang2002video,agarwala2004keyframe,Li05VideoGCut_SIGGRAPH,Wang05IVC_SIGGRAPH,Bai09VideoSnapCut_SIGGRAPH,Zhong2012UDC_SIGGRAPHAsia}. While practically useful systems have been developed, some fundamental problems still remain unattended. In this paper, we address video cutout from a newly identified fundamental aspect. 

%An ideal video cutout system should be i) accurate, i.e., label the target objects precisely; ii) robust, i.e., achieve accurate results over a wide range of video; and iii) user-friendly, i.e., require minimal and simple user inputs. While it is obviously a  challenging task, significant progress has been achieved in the past decade~\cite{chuang2002video,agarwala2004keyframe,Li05VideoGCut_SIGGRAPH,Wang05IVC_SIGGRAPH,Bai09VideoSnapCut_SIGGRAPH,Zhong2012UDC_SIGGRAPHAsia}. %In this paper, we study the segmentation propagation in a general video object cutout system.

\subsection{Related works}
The latest video object cutout systems~\cite{Bai09VideoSnapCut_SIGGRAPH,bai2010dynamic,zhong2010transductive,Zhong2012UDC_SIGGRAPHAsia,Fan2015JumpCut} comprise three major steps: (1)  \textbf{Keyframe segmentattion.} performing keyframe image segmentation and refinement; (2) \textbf{Foreground-background classification.} performing classification on other frames given the keyframe segmentation; (3)  \textbf{Segmentation refinement.} converting the classifier outputs to final cutout results on non-keyframes. Steps 2 and 3 are usually iteratively applied to subsequent frames until the user creates a new keyframe due to occurrence of visible errors.  

The keyframe segmentation step can be done using interactive single image segmentation techniques. In foreground-background classification, with the help of motion estimation, foreground classifiers at different scales are constructed/trained from the segmented object in the current frame, and then applied to other frames. The classifiers generate a soft foreground probability map, which is incorporated into a segmentation model in the segmentation refinement step to generate the final object mask on the non-keyframes. 

\begin{figure}[t]
	\centering
	{\small\begin{tabular}{
				@{\hspace{0mm}}c@{\hspace{0mm}}c@{\hspace{0mm}}c
			}
			% Requires \usepackage{graphicx}
			\includegraphics[width=0.32\columnwidth]{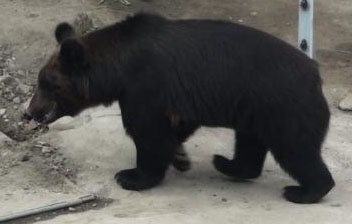}&\hspace{-1.5mm}
			\includegraphics[width=0.32\columnwidth]{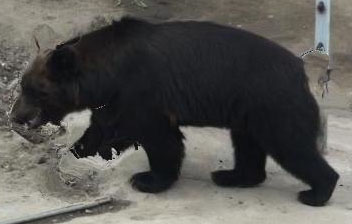}&
			\includegraphics[width=0.32\columnwidth]{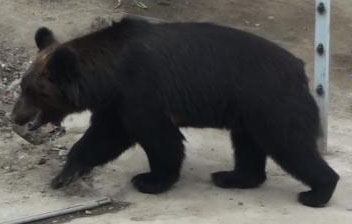}\\
			frame $t$ & warped frame $t$ & frame $t+1$\\
			\iffalse
		\end{tabular}
	\end{figure}
	\begin{figure}
		\centering
		\begin{tabular}{
				@{\hspace{0mm}}c@{\hspace{0mm}}c@{\hspace{0mm}}c
			}
			\fi
			\includegraphics[width=0.32\columnwidth]{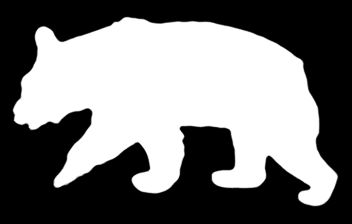}&\hspace{-1.5mm}
			\includegraphics[width=0.32\columnwidth]{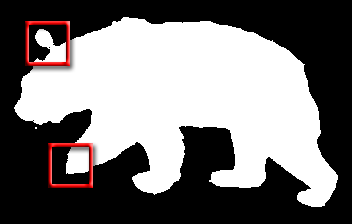}&
			\includegraphics[width=0.32\columnwidth]{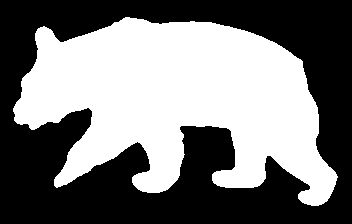}\\
			ground truth of $t+1$& Result by \cite{Zhong2012UDC_SIGGRAPHAsia} & Our method  \\
			% &  & \\
		\end{tabular}}
		\caption{The need of optimal segmentation rectification.}\label{FIG:ProbExample}\vspace{-0.5cm}
	\end{figure}
\begin{figure*}[t]
	\centering
	% Requires \usepackage{graphicx}
	\includegraphics[width=0.95\linewidth]{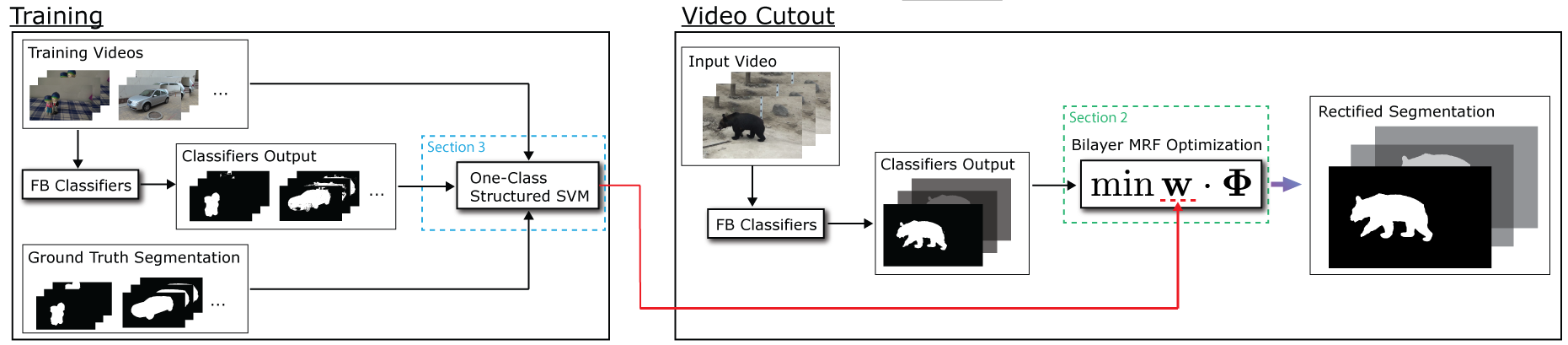}
	\caption{An overview of our approach.}\label{FIG:flowchart}
\end{figure*}
The Foreground-background classification and segmentation refinement are often considered as a single module called \emph{segmentation propagation}, and most previous works focus only on designing new foreground-background classifiers, while little attention has been paid to the optimal estimation of the segmentation given the classifier output. In Video Snapcut~\cite{Bai09VideoSnapCut_SIGGRAPH}, Bai et al. applied the conventional Markov random field (MRF) to the classification output to refine the result. More recently, Zhong et. al~\cite{Zhong2012UDC_SIGGRAPHAsia} applied matting directly after classification. The matting step behaves like random walk segmentation~\cite{grady2006random} and the latter is closely related to the MRF model~\cite{Sinop2007seeded}. We observe that when the errors from the classifiers tend to bias toward either the false positive or false negative error, the common MRF or matting, which treat the two types of errors equally, would fail to remove the errors satisfactorily. As an example, Figure~\ref{FIG:ProbExample} shows that common method could over shrink the spurious foreground regions produced by the classifier. 

More recently, Fan et al.\cite{Fan2015JumpCut} proposed a novel method for propagating segmentation to non-successive frames in order to handle large object displacement. The propagated segmentation mask was refined using geodesic active contour (GAC) model \cite{Caselles97GAC} and the level set method \cite{OsherSethian88Fronts}, rather than MRF with graph cuts. Similar to the MRF based frameworks, GAC with level set method has also not been used to handle the possibly asymmetrically distributed FP and FN. Besides, other video cutout frameworks have been proposed \cite{Tong2011video,FuHongbo2012EXCOL,zhang2015efficient}. Our contribution is parallel to these directions towards accurate and user-friendly video cutout. 

\subsection{Contributions}
In this paper, we address the possibly asymmetrically distributed FP and FN errors in the output of foreground-background classifier, which we call {\em segmentation rectification}. The significance of this subproblem in the context of video cutout is first identified to the best of our knowledge. 
%Our core idea is to fit the propagated segmentation to its nearest edges in the propagated frame. We cast this problem into an edge-based bilayer MRF model and solve it using graph cut. The bilayer MRF is trained on large training data by a one-class structured SVM (Support Vector Machine). 
%The main idea is to learn and integrate the properties of the classifiers into a global optimization framework, and the errors that are most likely to be made by the classifiers can be effectively detected and prevented in the final segmentation. %Experiments with comprehensive quantitative evaluations demonstrate that our approach upon integration with the video cutout system can lead to consistent and significant improvement on segmentation accuracy, which can therefore reduce the required user effort for video cutout. 

Our contribution is twofold. First, we propose a novel bilayer MRF in which the data term can treat the false positive and false negative errors from any given classifier differently using separate weights. Second, we propose a novel one-class structured support vector machine (OSSVM) model to learn the weights, as a computationally more favorable alternative to the conventional two-class structured SVM (2CSSVM) frameworks~\cite{Taskar05LargeMargin,Tsochantaridis05LargeMarginSSVM}. We further establish the conditional equivalence between the OSSVM and the conventional (2CSSVM). This theoretical justification of OSSVM is also new in the context of structured learning. Figure~\ref{FIG:flowchart} illustrates the flowchart of our method. 
%The aforementioned works mainly focus on Step 2 to design better foreground-background (FB) classifiers. While significant progress has been made, the performance of video object cutout is, however, still subject to classification errors. 

%Previous works on video cutout systems mainly focus on Step 2 to design better FB classifiers while Step 3 has not been well-explored. 
%The main goal of Step 3 is thus to reduce the classification error and generate an object boundary as accurately as possible.  

%In this figure, the classifier produced a few spurious foreground regions, and direct matting can over shrink the result that causes missing regions in the final segmentation. 
%Accordingly, user interactions will be frequently required during the video cutout process. 

%We propose a generic data-driven rectification approach. %, we can improve the segmentation results significantly. 
%We conduct a two-phase quantitative evaluation to validate the proposed method. In Phase I, we apply our rectification method with a benchmark global classifier and a state-of-the-art classifier~\cite{Zhong2012UDC_SIGGRAPHAsia} under various settings.
Our proposed method for segmentation rectification adapts to different classifiers and achieves significant improvement on error reduction over previous methods~\cite{Zhong2012UDC_SIGGRAPHAsia,Bai09VideoSnapCut_SIGGRAPH} in segmentation propagation in the experiments. Note that the {confidence map} adopted in~\cite{Zhong2012UDC_SIGGRAPHAsia} can be used to eliminate unreliable/ambiguous results from classifier output. However, it does not tell where the classifier is overconfident. Our method can remove the error in the classification regardless of the confidence of the classification.
%can achieve over 50\% improvement in boundary accuracy on conventional segmentation method and over 70\% in boundary accuracy on matting based approach used in previous cutout systems. %In Phase II, we combine our method with an state-of-the-art classifier to produce a complete system, and obtain up to 2 to 4 times improvement over previous cutout systems. 
\subsection{Organization}
The rest of the paper is organized as follows. Section~\ref{sec:bilayerMRF} derives our bilayer MRF model, Section~\ref{SEC:OSSVM} describes the training process and Section~\ref{sec:practical} discuses the practical concerns for video cutout. Section~\ref{sec:experiment} details our experiment. Finally, we conclude our work and discuss about the difference between video cutout and other video segmentation tasks in Section in Section~\ref{sec:conclusion}.

% \cite{Szummer08LearnCRFbyGC} or searching for the \emph{worst} result from every training instance in the dataset \cite{Taskar05LargeMargin,Tsochantaridis05LargeMarginSSVM}. Hence, our method can handle large dataset.

%\input{relatedwork}

\section{Modeling segmentation rectification}
\label{sec:bilayerMRF}
%Our goal is to use edge information on the new frames as a shape prior to reduce the error from the foreground-background classifiers. 

\subsection{Segmentation refinement in video cutout}

%We first briefly review the shape-prior-based 

In conventional video cutout systems~\cite{Bai09VideoSnapCut_SIGGRAPH,Zhong2012UDC_SIGGRAPHAsia}, the classifier output is refined by using the MRF-based segmentation model. The MRF model can be written as:
\begin{equation}\label{eqn:MRF_prior}
\min_{f} \sum_{p\in\mathcal{P}} U_{p}(f_p) + \sum_{\{p,q\}\in \mathcal{N}} V_{pq}(f_p,f_q), %+ \lambda\sum_p S_{p}(f_p,f^{sp}_p),
\end{equation}
where $p$ refers to a  pixel, $\mathcal{P}$ is the set of all pixels, $f_p$ is the pixel label and $\mathcal{N}$ is a neighbourhood system. $U_p$ and $V_{pq}$ are the conventional unary and pairwise terms. The unary term can be used to represent the hard constraint given by the user, such as the seeds of foreground and background regions, or it can be a region model or a shape prior. The pairwise potential is often used to model the object boundaries, and it has also been used to represent advanced priors in segmentation~\cite{Veksler08StarShapePrior}.

The unary term for incorporating foreground-background model can be written in the following form:
%We observe that the unary term for binary labels can be represented as a shape distance in the following form:
\begin{equation}\label{EQ:D_TPFP}
\begin{split}
U_p(f_p) &= \sum_{p\in\mathcal{P}} (f_p-h_p)^2 \\
&= \sum_{p\in\mathcal{P}} f_p+h_p-2f_ph_p\\
&=\sum_{p\in\mathcal{P}} h_p(1-f_p)+f_p(1-h_p)\\
&=\sum_{p\in\mathcal{P}} \overline{f_p}h_p+f_p\overline{h_p}.
\end{split}
\end{equation}
where $h_p$ is the classifier output (or probability map that gives the classifier output), $h_p$ and $f_p$ are both binary, $ \overline{f_p} = 1-f_p$ and $\overline{h_p} = 1-h_p$. The unary term is a pixelwise shape distance between the label $f$ and the classifier output $h$. The above equation also provides a decomposition of the unary term, which implies that the unary term above can be naturally related to the two types of errors in segmentation, i.e., the false positives (FP) (background that wrongly considered as foreground) $\overline{f_p}h_p$, and the false negatives (FN) (missing foreground) $f_p\overline{h_p}$, as illustrated in Figure~\ref{FIG:SSM}. FPR is defined as 
\begin{equation}
FPR = \frac{\#\hbox{ of wrongly classified foreground pixels}}{\hbox{total \# of pixels}}
\end{equation}
and FNR is defined as 
\begin{equation}
FNR = \frac{\#\hbox{ of wrongly classified background pixels}}{\hbox{total \# of pixels}}.
\end{equation}
\begin{figure}[t]
	\centering
	% Requires \usepackage{graphicx}
	\subfloat{\includegraphics[height=3.0cm]{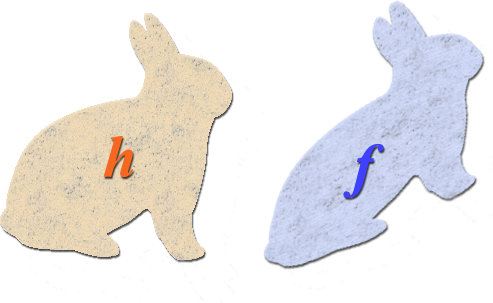}}
	\subfloat{\includegraphics[height=3.0cm]{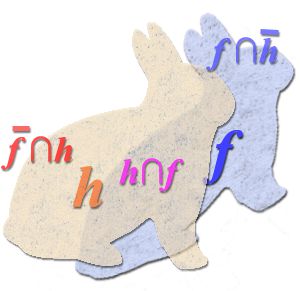}}
	% Requires \usepackage{graphicx}
	% \subfloat[]{\includegraphics[width=0.45\columnwidth]{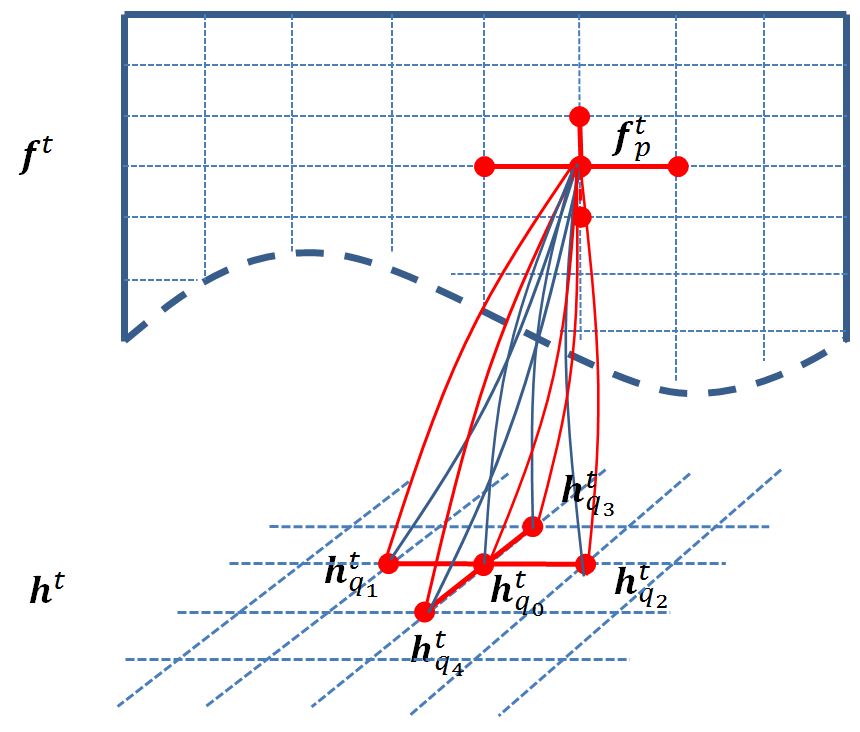}}
	\caption{Illustration of the two types of errors in segmentation propagation.}\label{FIG:SSM}
\end{figure}

\subsection{A case study on the classification error}\label{sec:errana}
%\paragraph{An analysis of the error in the segmentation propagation}
%In order to derive a suitable shape prior $S_p$, we need to understand the error properties of the propagated segmentation.
Here, we conduct a quantitative study on the classification errors produced by the state-of-the-art foreground-background classifier for video cutout~\cite{Zhong2012UDC_SIGGRAPHAsia}. In this study, we perform segmentation propagation using Zhong et al.'s classifier~\cite{Zhong2012UDC_SIGGRAPHAsia} on all consecutive frame pairs in their training dataset, and we compute the false positive ratio (FPR) and false negative ratio (FNR) for each target frame. The quantitative results are shown in Figure~\ref{FIG:StatEvid}: the FPR is about {\em $11.22$ times greater} than the FNR, implying that the two terms in Eq. (\ref{EQ:D_TPFP}) should not be considered as equally important in the model. This case study disproves the universal fidelity of the unary term based on symmetric distance measure in the conventional MRF model. There may be multiple complicated causes of this phenomenon and we omit the in-depth analysis on this specific classifier, since our method is generically applicable to removing errors from any classifier. 
\begin{figure}
	\centering
	\includegraphics[width=0.7\columnwidth]{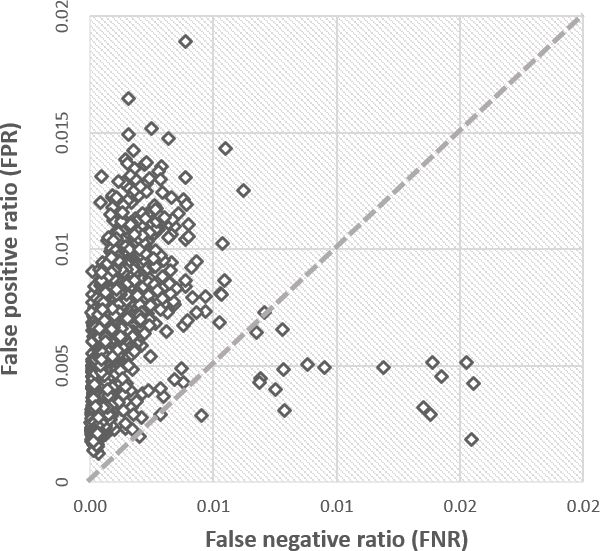}
	\caption{FPR vs. FNR from Zhong et al.'s FB classifier \cite{Zhong2012UDC_SIGGRAPHAsia}}\label{FIG:StatEvid}
\end{figure}

\subsection{A generic shape distance function}
Our segmentation rectification method is inspired by the shape-prior based MRF segmentation model~\cite{Freedman2005ShapePriorGC,Vu2008Shape} in which shape distance is used in the segmentation model for handling occlusion and background clutter. In this work, we view the data term as shape distance and we show why and how we could reformulate this shape distance for segmentation rectification.

%To convert our observations to a method, we propose to use an optimization framework. We observe that the false positive (FP) region in the form of $A\cap \bar{B}$ and the false negative (FN) region in the form of $\bar{A}\cap {B}$ also form a shape distance function as follows:
%\begin{equation}
%S(A,B) = \sum_{p\in\mathcal{P}} (A_p-B_p)^2= \sum_{p\in\mathcal{P}} A_p\overline{B}_p+\overline{A}_pB_p
%\end{equation}
%where $A$ and $B$ are 0-1 indicators of the shape regions at every pixel $p$ in the image domain $\mathcal{P}$, $C$ is a constant to $B$. The equalities holds true for binary labels. By minimizing this shape distance w.r.t. $B$ given $A$ we can find the shape $B$ most similar to $A$. Unfortunately, the FP and FN are considered equally important in this form, which is obviously not optimal according to Figure~\ref{FIG:StatEvid}. We therefore need to extend this distance to handle general shape distance function.
From Eq. (\ref{EQ:D_TPFP}), we observe that the common shape distance used in the data term of the MRF uniquely breaks down into FP and FN with equal weights. To handle the possibly asymmetrically distributed FP and FN errors, we propose the following generic shape distance function to model the relationship between the classifier output and the true segmentation:
\begin{equation}\label{EQ:SDist}
S_w(f,h) = \sum_{\{p,q\}\in\mathcal{N}_{fh}} w^{outside}_{pq}\overline{f_p}h_{q}+w^{inside}_{pq}f_p\overline{h_{q}},
\end{equation}
where $\mathcal{N}_{fh}$ denotes the neighborhood system across $f$ and $h$. In this formulation, $w^{outside}_{pq}$ and $w^{inside}_{pq}$ are two unknown data-dependent balancing weights. 

The neighborhood system across $f$ and $h$ yields a novel graph as visualized in Figure~\ref{FIG:BilayerMRF}. The graph is in a bilayer structure: one layer is defined on the image to represent the unknown segmentation label $f^t$ at frame $t$, and the other layer is used to represent the propagated label $h^t$. 
% which we will show how to obtain using machine learning. %The distance is asymmetric if $w^{inside}_{qp}\neq w^{outside}_{qp}$. 
\begin{figure}
  \centering
  \includegraphics[width=0.85\linewidth]{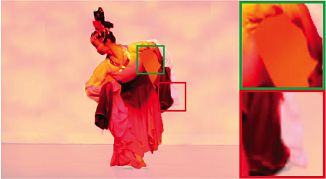}
  \caption{FP (green box) and FN (red box) from rotobrush on a frame in the ``Chinese dance'' sequence.}\label{FIG:dance_FP_FN}
\end{figure}
We have the following observation for different configurations of weights when $p=q$:
\vspace{-5pt}
\begin{center}
\def\arraystretch{1.2}

\small{\begin{tabular}{c|c}
\toprule
\multicolumn{1}{c|}{\textbf{weights}}& \textbf{effect}\\
\hline
 $w^{outside}_{pq}$ $=$ $w^{inside}_{pq}$ & Penalize FP and FN equally\\
$w^{outside}_{pq}$ $>$ $w^{inside}_{pq}$ & Penalize more FP than FN~~~~\\
 $w^{outside}_{pq}$ $<$ $w^{inside}_{pq}$ & Penalize more FN than FP~~~~\\
\bottomrule
\end{tabular}}
\end{center}
This means the FP and FN can be treated differently with proper weights. Besides, by considering all the $q$s in $\mathcal{N}_{fh}$ in the formulation, the noise in the classification result can also be reduced, and the resultant $S_{w}$ can be viewed as the local average distance.
\begin{figure}
  \centering
  % Requires \usepackage{graphicx}
  {\includegraphics[width=0.7\columnwidth]{}}
  \caption{Graph structure of the proposed bilayer MRF model. $p$ in $f^t$ will connect to five neighboring $q$s in $h^t$}\label{FIG:BilayerMRF}
\end{figure}

\subsection{The MRF model for segmentation rectification}

%In the above, we assume the shape B is explicitly defined. However, the shape B is implicitly defined in the image in our case. Hence, in order to find B we also need the model of B in the image. Motivated by the fact that edge cue is sufficient for a satisfactory segmentation~\cite{caselles97GAC,Xu98GVF,BoykovJolly06GraphCutsSeg}, we propose to \emph{find the object shape defined by the edges, which is most asymmetrically similar to the result of segmentation propagation}.

We can now plug our generic shape decision function in Eq.~(\ref{EQ:SDist}) into the conventional MRF for segmentation propagation to arrive at a novel model for segmentation rectification:
%\begin{equation}
%\min_{f^{t}} \underbrace{\sum_{\{pq\}\in \mathcal{N}} w^e_{qp}|f^{t}_{p}-f^{t}_{q}|}_{edge~model}+ \lambda\underbrace{S(h^t,f^{t})}_{shape~distance},
%\end{equation}

%This asymmetric shape similarity can also be incorporated in an MRF energy as:

\begin{equation}\label{EQ:Bi-MRF}
\min_{f^{t}} S_w(f^t,h^t)+\sum_{\{p,p'\}\in\mathcal{N}}\delta_e(f^{t}_p,f^t_{p'}),
\end{equation}
where $h^t,f^{t}$ are the hypothesis segmentation label and the unknown label at frame $t$, $\mathcal{N}$ is the neighborhood system for $f^t$. 

To comply with the standard graph-cuts representation of MRF energy, we may rewrite $S_w(f^t,h^t)$ as follows
\begin{equation}
S_w(f^t,h^t) = \sum_{\{p,q\}\in\mathcal{N}_{fh}} \delta_s({f^{t}_p},h^t_q)
\end{equation}
where $\mathcal{N}_{fh}$ denotes the neighborhood system for the graph defined on $h^t$ and $f^{t}$, $\delta_s(f^t_p,h^t_q)$ is defined according to the generic shape distance measure, shown in Eq. (\ref{EQ:SDist}), as:
\begin{equation}
\delta_s(f^t_p,h^t_q) = \left\{\begin{array}{lr}
                  w^{inside}_{pq},  & \hbox{if }h^t_q=0,f^t_p=1\\
                  w^{outside}_{pq}, & \hbox{if }h^t_q=1,f^t_p=0
                \end{array}\right.
\end{equation}
where $w^{inside}_{pq}$ and $w^{outside}_{pq}$ are to be determined. $\delta_e(f^{t}_p,{f^{t}_{p'}})$ in Eq.~(\ref{EQ:Bi-MRF}) corresponds to the boundary edge model proposed in~\cite{Mishra2012AVS}, which has been shown to be effective for interactive segmentation:
\begin{equation}\label{EQ:edgeterm}
\delta_e(f^{t}_p,f^t_{p'}) =  w^{edge}\cdot w^{e}_{pq} \big|f^{t}_p-f^t_{p'}\big|,
\end{equation}
where $w^{edge}$ is the model weight to be determined. $w^{e}_{pp'}$ in \cite{Mishra2012AVS} was defined as:
\begin{equation}\label{EQ:weq}
w^{e}_{pp'}=\left\{\begin{array}{lr}
                    \exp(-5I_e(p,p')),~ & I_e(p,p')\neq0 \\
                    20,~ & \hbox{Otherwise}
                  \end{array}
\right.
\end{equation}
where $I_e(p,p')$ is 1 if either $p$ or $p'$ is on edge. Otherwise, value is 0. We also normalize the weight by $w^{e}_{pp'}=w^{e}_{pp'}/\max_{\{p,p'\}\in\mathcal{N}}(w^{e}_{pp'})$.

The input to the graph cuts solver includes the graph structure, the estimated segmentation of the current image ($I^t$) and the edge map ($I_e$) of the current image. The output is the rectified segmentation ($f^t$). 

The energy function in our bilayer MRF model in Eq.~(\ref{EQ:Bi-MRF}) can be rewritten in the following compact form:
\begin{equation}\label{EQ:Bi-MRF_linear}
%\begin{split}
E(f^t|\mathbf{w},h^t,w^e_{pp'}) 
%&= \sum_{\{pq\}\in\mathcal{N}_{hf}} w^{inside}_{qp}(1-h^t_q)f^{t}_p\\
%  &\hspace{1.3cm}+w^{outside}_{qp} h^t_q(1-f^{t}_p)\\
 % &\hspace{0.5cm}+\sum_{\{pq\}\in\mathcal{N}}w^{edge}w^{e}_{pq} \big|f^{t}_p-{f^{t}_q}\big|\\
= \mathbf{w}\cdot \mathbf{\Psi}(h^t,w^e_{pp'},f^t),
%\end{split}
\end{equation}
where $\cdot$ is the vector dot product, $w^e_{pp'}$ is defined in Eq.~(\ref{EQ:weq}), the weight vector is defined as $
\mathbf{w}=\big[w^{edge}, w^{inside}_{pq}, w^{outside}_{pq}|\{p,q\}\in \mathcal{N}_{fh}\big]
$, and 
\begin{equation}\label{EQ:DefinePsi}
\begin{split}
\mathbf{\Psi} = \left(\begin{array}{lr}
                                              \sum_{pp'}w^{e}_{pp'} \big|f^{t}_p-{f^{t}_{p'}}\big|,&~\{p,p'\}\in\mathcal{N}~~ \\
                                              \sum_{pq}(1-h^t_q)f^{t}_p,&~\{p,q\}\in\mathcal{N}_{fh} \\
                                              \sum_{pq}h^t_q(1-f^{t}_p),&~\{p,q\}\in\mathcal{N}_{fh}
                                            \end{array}\right),
\end{split}
\end{equation}
Note that there can be multiple terms for $\{p,q\}\in \mathcal{N}_{fh}$ as shown in Figure \ref{FIG:BilayerMRF}. Throughout this paper we mainly consider the following parameterization of $\mathbf{w}$: $\mathbf{w} = [w_1,w_2,...,w_{11}]^T$$ = [w^{edge},~ w^{inside}_{p,q_0},~w^{inside}_{p,q_1}, ~w^{inside}_{p,q_2},~w^{inside}_{p,q_3},  ~w^{inside}_{p,q_4}, ~w^{outside}_{p,q_0}, $ $ w^{outside}_{p,q_1},~ w^{outside}_{p,q_2},~w^{outside}_{p,q_3},~w^{outside}_{p,q_4}]^T$.

%\begin{figure}
%  \centering
%  % Requires \usepackage{graphicx}
%  {\includegraphics[width=0.8\columnwidth]{imgs/bilayer_graph}}\vspace{-0.2cm}
%  \caption{Graph construction for the bilayer MRF model}\label{FIG:GraphBilayerMRF}
%\end{figure}

\begin{figure}[t]
  \centering
  % Requires \usepackage{graphicx}
  \includegraphics[width=0.95\columnwidth]{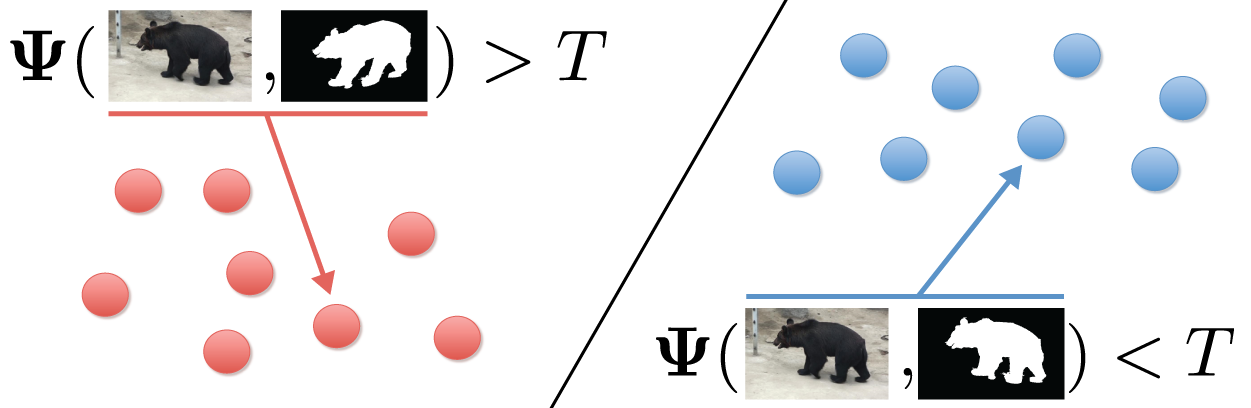}
  \caption{Idea of SSVM. Red dots represents the ``bad'' class, and the blue dots represent the ``good'' class. The black line separating the two classes is the underlying classifier $\Psi=T$, where $T$ is a thresholding constant. The value of $T$ is implicit in our method.}\label{fig:SSVM}
\end{figure}

\section{Learning the optimal model for segmentation rectification with one-class structured SVM}\label{SEC:OSSVM}

Based on the previous observations, it becomes crucial to determine the optimal parameters in the proposed MRF model. One popular method for this task is known as 
\emph{structured learning}~\cite{Taskar05LargeMargin,Tsochantaridis05LargeMarginSSVM,Szummer08LearnCRFbyGC,Joachims2009CuttingPlaneSSVM}. The basic idea is to consider the MRF parameter learning problem as a classification problem where good segmentations form one class and bad segmentations are the other class. This idea is illustrated in Figure \ref{fig:SSVM}. However, this framework requires searching for negative label samples $\mathbf{f}_k$, or the worst case~\cite{Taskar05LargeMargin,Tsochantaridis05LargeMarginSSVM,Szummer08LearnCRFbyGC,Joachims2009CuttingPlaneSSVM}, which can be time-consuming. To remove the need for negative samples, we propose to apply the one-class SVM, instead of the conventional two-class formulation~\cite{Taskar05LargeMargin}, to the structured learning problem. The one-class SVM only requires samples from one class, e.g. positive class, for training~\cite{Scholkopf01oneclassSVM,chen2001one,manevitz2002one}. Thus, the resultant one-class structured SVM (OSSVM) will also only require the positive samples which are the images paired with ground truth segmentations.

\subsection{The two-class structured SVM}
Before we present our model, we briefly describe the conventional two-class structured SSVM (2CSSVM). We begin with the generic compact form of MRF. The MRF energy for one image can be written as an inner product form w.r.t. the weights $\mathbf{w}$:
\begin{equation}
%\begin{split}
E(f|\mathbf{w},\mathbf{x}) %&= \sum_p \mathbf{w}_u \cdot (\mathbf{x}_p f_p) + \sum_{\{p,q\}\in \mathcal{N}} \mathbf{w}_{pq}\cdot (\mathbf{x}_{pq} f_pf_q)\\
                = \mathbf{w}\cdot \mathbf{\Psi}(\mathbf{x},\mathbf{f})
%\end{split}
\end{equation}
where $\mathbf{w}$ is defined in Eq. (\ref{EQ:Bi-MRF_linear}), and $\mathbf{x}=\{h^t,w_{pp'}^e\}$, $\mathbf{f}=f^t$ to be consistent with Eq. (\ref{EQ:Bi-MRF_linear}). Note that this general notations adopted in this subsection allows us to easily extend our approach to other MRF models.

We expect the MRF energy to be as small as possible for the ground truth $\mathbf{f}^*$ and we expect it to be as large as possible for other non-ideal $\mathbf{f}$. 
This principle can be used for learning the weight vector $\mathbf{w}$ from data~\cite{Taskar05LargeMargin,Tsochantaridis05LargeMarginSSVM}, and it can be expressed as:\vspace{-3mm}
\begin{equation}\label{EQ:2C_SSVM}
\begin{split}
&\min_{\mathbf{w},b,\vec{\xi}}~ {1\over2}\|\mathbf{w}\|^2 + {C\over N}\sum_{k=1}^N \xi_k \\
& \begin{array}{rl}
                  \hbox{s.t.:}&~ 2\mathbf{w}\cdot\mathbf{\Psi}(\mathbf{x}_k,\mathbf{f}_k)-b\geq +1 - \xi_k %, & \hbox{if~} \mathbf{f}_k \hbox{~is bad} 
                   \\
                   &~2\mathbf{w}\cdot\mathbf{\Psi}(\mathbf{x}_k,\mathbf{f}_k^*)-b\leq -1 + \xi_k\\ % & \hbox{if~} \mathbf{f}_k \hbox{~is good}\\
                   &~k=1,2,3,...,N,
\end{array}
\end{split}
\end{equation}
where $C$ is a constant, $k$ is the sample id, $N$ is the number of samples, the constants $b$ is a bias in conventional decision function and it's unnecessary in segmentation, and $\xi_k$ is a lack variable that tolerates errors in the training data. In the above model, the margin that separates the positive and negative samples are maximized by minimizing $\|\mathbf{w}\|^2$.% \cite{Taskar05LargeMargin,Tsochantaridis05LargeMarginSSVM}. 

Since the positive and negative samples are paired, the respective constraints for each pair of samples can be combined to yield the following constraint: %\vspace{-3mm}
\begin{equation}\label{EQ:SSVM_Constr}
                   \mathbf{w}\cdot\mathbf{\Psi}(\mathbf{x}_k,\mathbf{f}_k)-\mathbf{w}\cdot\mathbf{\Psi}(\mathbf{x}_k,\mathbf{f}^*_k)\geq \Delta(\mathbf{f}_k,\mathbf{f}^*_k) - \xi_k
\end{equation}
Directly combining the constraints in Eq. (\ref{EQ:2C_SSVM}) gives $\Delta = 1$. For segmentation problem, $\Delta$ is a specialized cost and is often set as $\Delta = \mathtt{mean}(|\mathbf{f}_k-\mathbf{f}_k^*|^2)$~\cite{Nowozin2011SL_review}. With the additional requirements of positiveness and boundedness of $\mathbf{w}$, the above yields the conventional SSVM of the following form~\cite{Tsochantaridis05LargeMarginSSVM,Taskar05LargeMargin,Szummer08LearnCRFbyGC,Joachims2009CuttingPlaneSSVM}. 
\begin{equation}\label{EQ:2C_SSVM1}
\begin{split}
\min_{\mathbf{w},\vec{\xi}}~& {1\over2}\|\mathbf{w}\|^2 + {C\over N}\sum_{k=1}^N {\xi}_k \\
\hbox{s.t.:}~&  \forall k,~  \mathbf{w}\cdot(\mathbf{\Psi}(\mathbf{x}_k,\mathbf{f}_k)-\mathbf{\Psi}(\mathbf{x}_k,\mathbf{f}^*_k))\geq \Delta_k - \xi_k, \\
&  \sum_i w_i=1, \mathbf{w}\geq 0. 
\end{split}
\end{equation}
where $\Delta_k =\Delta(\mathbf{f}_k,\mathbf{f}^*_k) $. Note that we further imposed a normalization constraint for the weights.

\subsection{One-class structured SVM}

By simply dropping the constraint for non-ideal segmentation $\mathbf{f}_k$ in Eq. (\ref{EQ:2C_SSVM}) and removing the irrelevant parameter $b$, we obtain
\begin{equation}\label{EQ:OSSVM}
\begin{split}
\min_{\mathbf{w},\vec{\varepsilon}}~& {1\over2}\|\mathbf{w}\|^2 + {C\over N}\sum_{k=1}^N \varepsilon_k  \\
\hbox{s.t.:}~&  \forall k,~ \mathbf{w}\cdot\mathbf{\Psi}(\mathbf{x}_k,f^*_k)\leq -1+\varepsilon_k, \\
&  \sum_i w_i=1, \mathbf{w}\geq 0. 
\end{split},
\end{equation}
where we used $\varepsilon_k$ instead of $\xi_k$ to differentiate from the original SSVM formulations. We call this model one-class structured support vector machine (OSSVM), since it does not reply on $f_k$. This optimization problem is well-known as one class support vector machine, and it has been thoroughly studied previously in the classification literature ~\cite{Scholkopf01oneclassSVM}. Here we further establish the rationale of the OSSVM in the context of structured learning. 

We observe that the OSSVM, requiring only ground-truth masks, is conditionally equivalent to the conventional two-class SSVM model in which both of non-ideal segmentations and ground-truth segmentations are used. The formal statement is as follows.
\begin{theorem}\label{THM:2COEQ}
The OSSVM model in Eq. (\ref{EQ:OSSVM}) is identical to the two class SSVM model in Eq. (\ref{EQ:2C_SSVM1}) if both of the following conditions are true:
\begin{enumerate}[(I)]
	\item $\Delta(\mathbf{f}_k,\mathbf{f}^*_k)=1$;
	\item $\forall k, \mathbf{\Psi}(\mathbf{x}_k,f_k)=\mathbf{b}_k$, where $\mathbf{b}_k$ is an arbitrary constant vector with equal elements. Its total sum is denoted by ${B}_k$.
\end{enumerate}
\end{theorem}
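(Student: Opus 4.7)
The plan is to reduce the two-class SSVM program in Eq.~(\ref{EQ:2C_SSVM1}) to the one-class program in Eq.~(\ref{EQ:OSSVM}) by applying the two hypotheses in sequence, relying crucially on the normalization constraint $\sum_i w_i = 1$ that both formulations already impose. The first step is to invoke condition (II): since every entry of $\mathbf{b}_k$ equals $c_k := B_k/n$ (where $n$ is the dimension of $\mathbf{w}$), the normalization collapses the inner product to
\[
\mathbf{w}\cdot\mathbf{\Psi}(\mathbf{x}_k,\mathbf{f}_k)=\mathbf{w}\cdot\mathbf{b}_k=c_k\sum_i w_i=c_k,
\]
a scalar that is constant over every feasible $\mathbf{w}$.

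Substituting this together with $\Delta_k=1$ from condition (I) turns the 2CSSVM constraint into $\mathbf{w}\cdot\mathbf{\Psi}(\mathbf{x}_k,\mathbf{f}^*_k)\leq c_k-1+\xi_k$. The next step is the linear change of variables $\varepsilon_k := \xi_k + c_k$, which rewrites this inequality exactly as the OSSVM constraint $\mathbf{w}\cdot\mathbf{\Psi}(\mathbf{x}_k,\mathbf{f}^*_k)\leq -1+\varepsilon_k$. The remaining constraints $\sum_i w_i=1$ and $\mathbf{w}\geq 0$ are carried over verbatim, so the two feasible sets coincide after the renaming.

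It only remains to compare the objectives. Under the same substitution the 2CSSVM objective rewrites as $\tfrac{1}{2}\|\mathbf{w}\|^2+\tfrac{C}{N}\sum_k\varepsilon_k-\tfrac{C}{N}\sum_k c_k$; the last term depends only on the data ($B_k$ and $n$) and is therefore a constant offset that does not influence the minimizer. The two programs consequently share the same set of optimal $\mathbf{w}$, with the slacks linked by $\varepsilon_k=\xi_k+c_k$, which is the stated equivalence.

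The step I expect to require the most care is the use of $\sum_i w_i=1$ to turn $\mathbf{w}\cdot\mathbf{b}_k$ into a genuine constant; without that constraint the reduction would leave a $\mathbf{w}$-dependent residual in both constraint and objective that would not cleanly decouple into a constant shift. A secondary check is that the affine shift on the slacks respects any implicit nonnegativity convention on $\xi_k$ and $\varepsilon_k$, which is easily verified from the KKT conditions since shifting by the same constant on both sides of the complementary slackness inequalities does not move the argmin.
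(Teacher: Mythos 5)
Your proof is correct and takes essentially the same route as the paper's: apply the normalization $\sum_i w_i = 1$ so that $\mathbf{w}\cdot\mathbf{b}_k$ becomes a data-dependent constant, absorb that constant into a shifted slack variable to recover the OSSVM constraint, and note that the objectives then differ only by a constant offset. Your identification of the constant as $c_k = B_k/n$ is in fact slightly more careful than the paper's proof, which writes the shift as $\hat{\xi}_k = B_k + \xi_k$; this discrepancy is immaterial to the equivalence.
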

\begin{figure}[!t]
	\centering
	% Requires \usepackage{graphicx}
	\includegraphics[width=0.95\columnwidth]{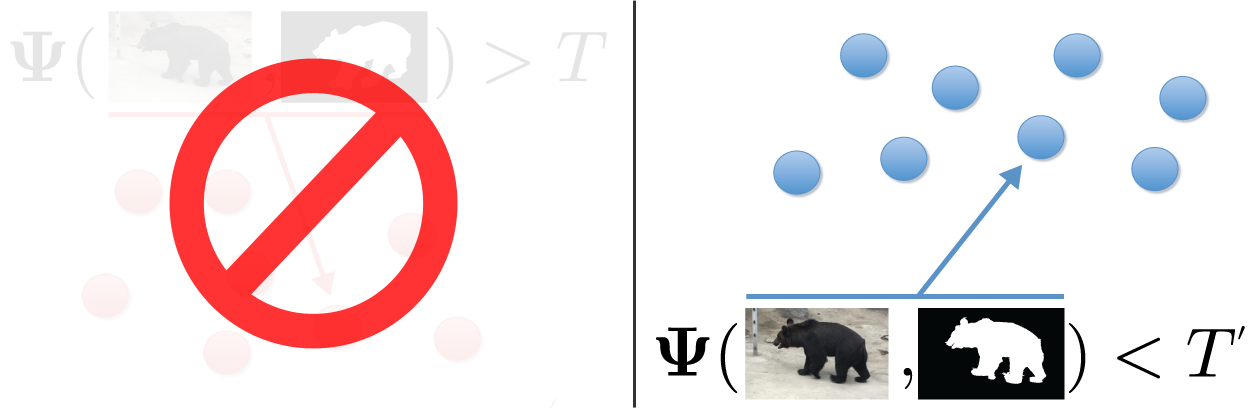} 
	\caption{Idea of OSSVM. The threshold line, $\Psi=T'$, is estimated based on the ``good'' class only. The value of $T'$ is implicit in our method.}\label{fig:OSSVM}
\end{figure}

There are obviously infinitely many such non-ideal segmentations for many common potentials and the condition allows $\mathbf{b}_k$ to vary for different $k$. We defer its proof to the Appendix.

In a nutshell, this OSSVM model tries to maximize the ``margin'' from the energy corresponding to all ground-truth samples to the smallest energy produced by the same sample set, such that the margin between the energy of the positive samples and the potential energy of unknown negative samples is also maximized to some extent. We visualize this idea in Figure~\ref{fig:OSSVM}. 

\noindent {\bf Edge prior.} A price of removing the negative samples is the accuracy of the model. Without negative sample, the training data may not be sufficient to yield satisfactory MRF model. We thus propose to impose a prior during the OSSVM learning. This can be crucial to even two-class SSVM when the ground-truth data itself contains errors. Our prior is that the \emph{edge term is important to segmentation}. Thus, the weight $w^{edge}$ in Eq. (\ref{EQ:edgeterm}), which is actually $w_1$, has to be large. This is motivated by the fact that the edge cue is almost always valid, i.e. the final segmentation boundary should always adhere to image edges. To this effect, we propose to maximize the weight on edge features as much as possible. The corresponding one-class SVM model with edge prior for structured learning can be written as follows:
\begin{equation}\label{EQ:OSSVM_wp}
\begin{split}
\min_{\mathbf{w},\vec{\varepsilon}}~& {1\over2}\|\mathbf{w}\|^2 + {C\over N}\left(\sum_{k=1}^N \varepsilon_k\right) - w^{edge} \\
\hbox{s.t.:}~&  \forall k,~ \mathbf{w}\cdot\mathbf{\Psi}(\mathbf{x}_k,\mathbf{f}^*_k)\leq -1+\varepsilon_k \\            
             &  \sum_i w_i=1, \mathbf{w}\geq 0, \\
\end{split}
\end{equation}
%where $\mathbf{\Phi}(\mathbf{x}_k,\mathbf{f}^*_k)=\mathbf{M}-\mathbf{\Psi}(\mathbf{x}_k,\mathbf{f}^*_k)$ where $\mathbf{M}$ is a constant vector with equal elements of large values to ensure $\mathbf{\Phi}(\mathbf{x}_k,\mathbf{f}^*_k)\geq 0$. In implementation, the element value in $\mathbf{M}$ is set to be the maximal value in all elements and all vectors of $\mathbf{\Phi}(\mathbf{x}_k,\mathbf{f}^*_k)$. It can also be easily proven that the optimization models are equivalent with any $\mathbf{M}$.

%Please refer to Appendix in the supplementary file for more details on establishing the equivalence between the structured learning problem and one-class SVM. 

%\if 0
% This prior can be easily incorporated in the above quadratic programming as a linear term to arrive at:
%\begin{equation}\label{EQ:OSSVM_wp}
%\begin{split}
%\min_{\mathbf{w},\beta,\vec{\hat{\xi}}}~& {1\over2}\|\mathbf{w}\|^2 + {C\over N}\sum_{k=1}^N (\hat{\xi}_k-\beta)-w^{edge}  \\
%\hbox{s.t.:}~&  \forall k,~ \mathbf{w}\cdot\mathbf{\Phi}(\mathbf{x}_k,\mathbf{f}^*_k)\geq \beta-\hat{\xi}_k \\
%             &  \forall k,~\beta-\hat{\xi}_k\geq 0 \\
%             &  \sum_i w_i=1, \mathbf{w}\geq 0
%\end{split}
%\end{equation}
%\fi

\begin{table*}[!t]
\renewcommand{\arraystretch}{1}
\centering
\caption{Learned weights for the bilayer MRF illustrated in Fig. \ref{FIG:BilayerMRF}. {\bf A} is trained with GMM classifier. {\bf B} is trained with Zhong et al.'s classifier~\cite{Zhong2012UDC_SIGGRAPHAsia}. }\label{TB:learnedw_MRF}\vspace{-2mm}
\small{ \begin{tabular*}{\textwidth}{@{\extracolsep{\fill}}cc|ccccccccccc }
 \toprule
  % after \\: \hline or \cline{col1-col2} \cline{col3-col4} ...
  \multicolumn{2}{c|}{\bf w} 				& $w_1$ & $w_2$ & $w_3$ & $w_4$ & $w_5$ & $w_6$ & $w_7$ & $w_8$ & $w_9$ & $w_{10}$ & $w_{11}$ \\ \hline 
\multirow{3}{*}{\bf A}	& 2CSSVM & 0.52&0.01&0.01&0.01&0.01&0.01&0.048&0.089&0.098&0.098&0.093\\
  & OSSVM w/o prior  & 0.00068 &   0.0059 &  0.0058  & 0.0059 &  0.0059 &  0.0059  &   0.24 &    0.14   &   0.2   &  0.18   &  0.219 \\
  & OSSVM with prior &  0.091  &  0.006 &  0.0059  &  0.006  &  0.006  &  0.006  &   0.25   &  0.12   &  0.17  &   0.14  &   0.19\\ \hline
  \multirow{3}{*}{\bf B}	& 2CSSVM &0.170&0.0233&0.0424&0.0385&0.0380&0.045&0.115 &0.134&0.130&0.129& 0.136\\  
 	& OSSVM w/o prior  & 0.0004 &	0.017 &	0.016 &	0.016 &	0.016 & 0.016 & 0.5  & 0.083 & 0.11 & 0.12 & 0.099 \\
 & OSSVM with prior & 0.091  &	0.016 &	0.015 &	0.016 &	0.016 & 0.015 &	0.44 & 0.079 & 0.11 & 0.12 & 0.093\\ 
  \bottomrule  
 \end{tabular*}}\vspace{-0.2cm}
\begin{flushleft}
{\scriptsize where $\mathbf{w} = [w_1,w_2,...,w_{11}]^T = [w^{edge},~  w^{inside}_{p,q_0},~  w^{inside}_{p,q_1},~  w^{inside}_{p,q_2},~  w^{inside}_{p,q_3},~  w^{inside}_{p,q_4},~  w^{outside}_{p,q_0},~  w^{outside}_{p,q_1},~  w^{outside}_{p,q_2},~  w^{outside}_{p,q_3},~  w^{outside}_{p,q_4}]^T$}
\end{flushleft}
\vspace{-0.5cm}
\def\arraystretch{1}
\end{table*}
                                
\subsection{Learning algorithms}
Both of the SSVM and the OSSVM can be used for learning the weights in our model. We adopt the cutting plane algorithm for two-class SSVM \cite{Joachims2009CuttingPlaneSSVM}. We include the pseudocode for this method in Algorithm \ref{ALG:2CSSVM} for self-containedness. The pseudocode for our OSSVM learning method is presented in Algorithm \ref{ALG:OSSVM}. After obtaining the weights $\mathbf{w}^*$, we can use them in the rectification model shown in Eq. (\ref{EQ:Bi-MRF}) or Eq. (\ref{EQ:Bi-MRF_linear}). 

\begin{algorithm}[!t]
	\SetKwInOut{Input}{Input}\SetKwInOut{Output}{Output}
	\LinesNumbered
	
	\Input{Training images $\{\mathbf{x}_k|k=1,2,...,N\}$ paired with predicted masks $\{h_k|k=1,2,...,N\}$ from any classifier and	ground truth masks $\{f^*_k|k=1,2,...,N\}$}
	\Output{Learned weights $\mathbf{w}^*$ for the MRF potentials}
	$\mathbf{w}^0 \leftarrow \vec{1}$\;
	$\mathcal{W}\leftarrow\emptyset$\;
	\ForAll{Images}{
		$	
		\mathbf{\Psi}^*_k \leftarrow \mathbf{\Psi}(f^*_k|\mathbf{x}_k,h_k)
		$;\textbackslash{}\textbackslash{}\texttt{By Eq. (\ref{EQ:DefinePsi})}
	}
	\While{Not Converged}{		
		\ForAll{Images}{	
			$f_k \leftarrow \min\limits_{f_k'} \Delta(f_k',f^*_k)+\mathbf{w}^j\mathbf{\Psi}(f_k'|\mathbf{x}_k,h_k)$\;			
			$	
			\mathbf{\Psi}_k \leftarrow \mathbf{\Psi}(f_k|\mathbf{x}_k,h_k)
			$\;
			$_\Delta\mathbf{\Psi}_k \leftarrow \mathbf{\Psi}^j_k-\mathbf{\Psi}^*_k$\;
			$\Delta_k \leftarrow  \Delta(f_k,f^*_k)$;\hspace{4.5cm}	
			\textbackslash{}\textbackslash{}\texttt{Cutting plane generation}\\
		}
		$\mathcal{W} \leftarrow \mathcal{W} \bigcup \{_\Delta\mathbf{\Psi}_k,\Delta_k |k=1,...,N\}$\;
		$	
		\mathbf{w}^{j+1} \leftarrow \left\{\begin{array}{ll}
		&\min_{\mathbf{w},\vec{\xi}}~ {1\over2}\|\mathbf{w}\|^2 + {C\over |\mathcal{W}|}\sum_{n=1}^{|\mathcal{W}|} {\xi}_k \\
		&\hbox{s.t.:}~  \forall \{_\Delta\mathbf{\Psi}_n,\Delta_n\}\in\mathcal{W},\\ &\hspace{1.5cm}\mathbf{w}\cdot_\Delta\mathbf{\Psi}_n\geq \Delta_n - \xi_n, \\
		&\hspace{0.7cm}\sum_i w_i=1, \mathbf{w}\geq 0
		\end{array}\right.$;
		
		\textbackslash{}\textbackslash{}\texttt{$|\cdot|$ denotes number of elements}\\
		$j\leftarrow j+1$\;	
	}
	$\mathbf{w}^*\leftarrow\mathbf{w}^j$\;
	\caption{Two-class SSVM learning \cite{Joachims2009CuttingPlaneSSVM}}\label{ALG:2CSSVM}
\end{algorithm}

\begin{algorithm}
	\SetKwInOut{Input}{Input}\SetKwInOut{Output}{Output}
	\LinesNumbered
	
	\Input{Same as in Algorithm \ref{ALG:2CSSVM}}
	\Output{Same as in Algorithm \ref{ALG:2CSSVM}}
	\ForAll{Images}{
		$	
		\mathbf{\Psi}^*_k \leftarrow \mathbf{\Psi}(f^*_k|\mathbf{x}_k,h_k)
		$;\textbackslash{}\textbackslash{}\texttt{By Eq. (\ref{EQ:DefinePsi})}
	}

	$
	\mathbf{w}^*\leftarrow\left\{\begin{array}{ll}
	\min\limits_{\mathbf{w},\vec{\varepsilon}}~& {1\over2}\|\mathbf{w}\|^2 + {C\over N}\left(\sum_{k=1}^N \varepsilon_k\right) - w^{edge} \\
	\hbox{s.t.:}~&  \forall k,~ \mathbf{w}\cdot\mathbf{\Psi}^*_k\leq -1+\varepsilon_k, \\            
	&  \sum_i w_i=1, \mathbf{w}\geq 0 \\
	\end{array}\right.;
	$
	\textbackslash{}\textbackslash{}\texttt{According to Eq. (\ref{EQ:OSSVM_wp})}\\
	\caption{OSSVM learning}\label{ALG:OSSVM}
\end{algorithm}

\section{Practical concerns in implementation}\label{sec:practical}

\subsection{The shrinking bias of graph cut}

It is well known that graph cut for MRF model with 2nd-order pairwise potential suffers from the shrinking bias \cite{Veksler08StarShapePrior,Brian10GeoGraphCut}.  
%Although the proposed rectification could reduce other propagation errors such as outliers, it will also suffer from the shrinking bias. Unfortunately, our method may suffer even more, as the data-driven shape distance function often encourages shrinking other than expanding. 
%In video cutout since the classifiers are usually retrained on each propagated frame using the pixels near the boundary of the current segmentation,
%The shrinking bias thus may accumulate quickly over several propagated frames. Quantitative evidence is shown in the experiment section.
\if 0
\begin{figure}[t]
  \centering
  \begin{tabular}{
@{\hspace{0mm}}c@{\hspace{0.5mm}}c@{\hspace{0.5mm}}c @{\hspace{0mm}}c
@{\hspace{0mm}}c@{\hspace{0mm}}c@{\hspace{0mm}}c @{\hspace{0mm}}c
@{\hspace{0mm}}c@{\hspace{0mm}}c
}
  % Requires \usepackage{graphicx}
  \includegraphics[width=0.32\columnwidth]{imgs/Example_problem/id9_ealpha.jpg}&
  \includegraphics[width=0.32\columnwidth]{imgs/placeholder.jpg}&
    \includegraphics[width=0.32\columnwidth]{imgs/placeholder.jpg}\\
    (a) & (b) & (c)\\
\end{tabular}
  \caption{The shrinking bias and its remedy. (a) Classifier output \cite{Zhong2012UDC_SIGGRAPHAsia}. (b) Rectification by learned bilayer MRF. (c) After reclassification}\label{fig:reclassification}
\end{figure}
\fi
The recently proposed local forground-background classifiers~\cite{Bai09VideoSnapCut_SIGGRAPH,Zhong2012UDC_SIGGRAPHAsia} are shown to be able to correct local errors near the object boundary. The shrinking bias falls into this category as it introduces small errors near the boundary. Hence, we propose to feed the results of the rectified segmentation to re-train the foreground-background classifiers on the current frame, and use the updated classifiers to perform classification in the same frame again, to avoid the shrinking bias. 

\subsection{Computational complexity}
There exist quite a few efficient algorithms for solving graph cuts, i.e. the max-flow/min-cut problem. The computational time for the Boykov-Kolmogorov (BK) algorithm on a 2 MP image on CPU is about 160 ms, and the GPU implementation of graph cuts can be 2 times faster than on the CPU \cite{vineet2008cuda}. The foreground-background classifier we use is the one reported in \cite{Zhong2012UDC_SIGGRAPHAsia}, and its average computational time is about 1.5s for one frame on a PC with quad-core 3.3 GHz CPUs. Optical flows and edge maps can all be precomputed.
 worth showing due to the page limit. We include them in the supplementary material.
\section{Experimental results}\label{sec:experiment}
In the experiments, we compare our method with state-of-the-art methods for full sequence cutout with the initial keyframe segmentation. We avoid end-to-end system comparison since the interactive segmentation phase, i.e., Step 1, in the video cutout system is out of the focus of this work. We are unable to present all the results worth showing due to the page limit. We include them in the supplementary material.

%On a high level, we explore: (1) whether our rectification approach can improve the results from different classifiers; and (2) how much the variants of our method can improve upon existing segmentation propagation methods. On a low level, we study: (3) if our data-driven shape distance function is effective in reducing the propagation errors; (4) how the results change with different training conditions. All of these question are addressed in our experiments. 

\subsection{Experimental Setup}
\noindent {\bf Datasets.} 
We mainly evaluate our method on the dataset proposed in Zhong et al.~\cite{Zhong2012UDC_SIGGRAPHAsia}. The main advantage of this dataset is that the ground truth segmentation for each frame has been provided. The training data from Zhong et al.'s dataset contains 15 video sequences in total, 9 for training and 6 for testing. The \emph{Training set} we used in this work contains 2012 frames from their 9 training sequences, and the \emph{Test set} contains 741 frames from their testing sequences.% This Test set contains 2070 frames in total. We also subsampled the Test set to form a subset of 741 images. pq

\noindent {\bf Learning the weights.}
We use the training set to learn the parameters $\{w^{inside}_{pq},w^{outside}_{pq},w^{edge}\}$ of the bilayer MRF with both the conventional SSVM and our one-class SSVM as presented in section~\ref{SEC:OSSVM}. 

We considered applying our framework to rectifying the output of two classifiers. One is the Gaussian mixture model (GMM), which is a typical global foreground-background (FB) classifier, and the other is the state-of-the-art local FB classifier proposed in~\cite{Zhong2012UDC_SIGGRAPHAsia}. We apply the FB classifiers to all the consecutive frame pairs to generate the classifier output to be rectified. The classifier outputs, the ground truth segmentations, together with the images are then fed into the structured learning framework. Table~\ref{TB:learnedw_MRF} shows the trained parameters for both GMM and Zhong et al.'s classifier using different learning models. We empirically chose maximum iteration number to be 10 for the training process. The training time for OSSVM is about 0.062 seconds in MATLAB on Intel® Core™ i7-4700MQ Processor, while the training time for the conventional two class SSVM is about 7500 seconds (10 iteractions). 

It is interesting to note that the weights show \emph{strong asymmetry} structure, and the learned weights would penalize more false positive than false negative. Besides, the weights for GMM are more uniform for both \emph{inside} and \emph{outside} weights. This means the results by GMM is very noisy and strong smoothness is required for rectifying the GMM classifier. 
\begin{figure}[t]
  \centering
  % Requires \usepackage{graphicx}
  \includegraphics[width=0.95\columnwidth]{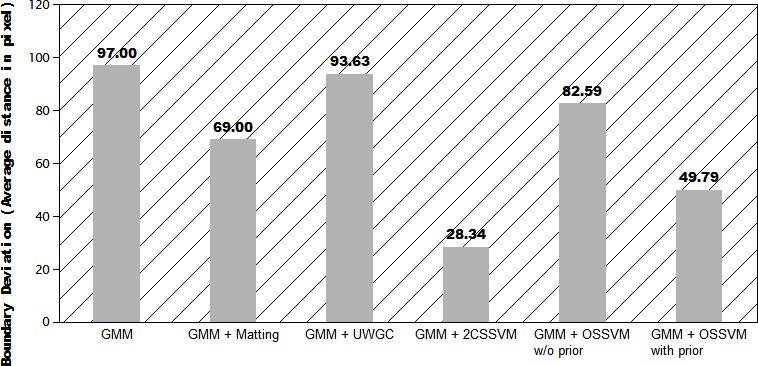}\\
  \caption{Effectiveness of our segmentation rectification for GMM classifier.}\label{FIG:RstRectfyGMM}
  \vspace{-0.5cm}
  \end{figure}
  
\noindent {\bf Evaluation metrics.} We mainly use \emph{boundary deviation} to measure the segmentation performance. It is defined as the average distance from the estimated boundary to the ground truth boundary. %$\textbf{X}_t$:
%$BD(\mathbf{X}_e,\mathbf{X}_t) = \max\left({\sum\limits_{\mathbf{x}_e\in \mathbf{X}_e} d(\mathbf{x}_e,\mathbf{X}_t)\over N_e},{\sum\limits_{\mathbf{x}_t\in \mathbf{X}_t} d(\mathbf{X}_e,\mathbf{x}_t)\over N_t}\right)$,
%where $N_e$ is the number of points on $\mathbf{X}_e$ and $N_t$ is the number of points on $\mathbf{X}_t$. We also use the aforementioned TPR and FPR to demonstrate the effectiveness of our method for reducing FPR relative to FNR.

%\subsection{Quantitative evaluation on validation set}
\noindent {\bf Methods for evaluation.} We mainly evaluate two variations of our method in the experiments: FB classifier + graph cuts with weights learned by two class structured SVM (2CSSVM), and FB classifier + graph cuts with weights learned by one class structured SVM (OSSVM). We shall call them: \textbf{Our method (2CSSVM)}, \textbf{Our method (OSSVM)}. We applied our method to GMM based FB classifier and the state-of-the-art FB classifier proposed in \cite{Zhong2012UDC_SIGGRAPHAsia}. We main compare with the FB classifier + Matting, as adopted in \cite{Zhong2012UDC_SIGGRAPHAsia}, and FB classifier + uniformly weighted graph cuts (UWGC), which is adopted in Rotobrush~\cite{Bai09VideoSnapCut_SIGGRAPH}.

\subsection{Rectifying global classifier}

\begin{figure}[!t]
	\begin{center}
		\begin{tabular}{
				@{\hspace{0mm}}c@{\hspace{0mm}}c@{\hspace{0mm}}c @{\hspace{0mm}}c
				@{\hspace{0mm}}c@{\hspace{0mm}}c@{\hspace{0mm}}c @{\hspace{0mm}}c
				@{\hspace{0mm}}c@{\hspace{0mm}}c
			}
			\begin{sideways}\parbox{20mm}{\centering\scriptsize GMM classifier}\end{sideways} &
			\includegraphics[width=0.4\columnwidth]{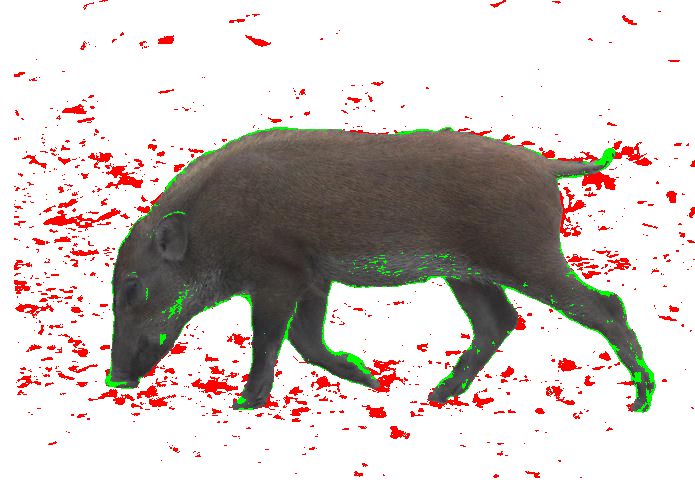}&
			\includegraphics[width=0.4\columnwidth]{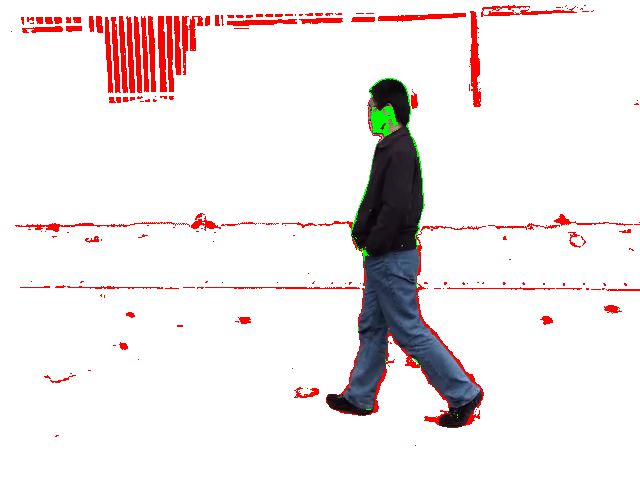}\\
			\begin{sideways}\parbox{20mm}{\centering\scriptsize GMM + Matting}\end{sideways} &
			\includegraphics[width=0.4\columnwidth]{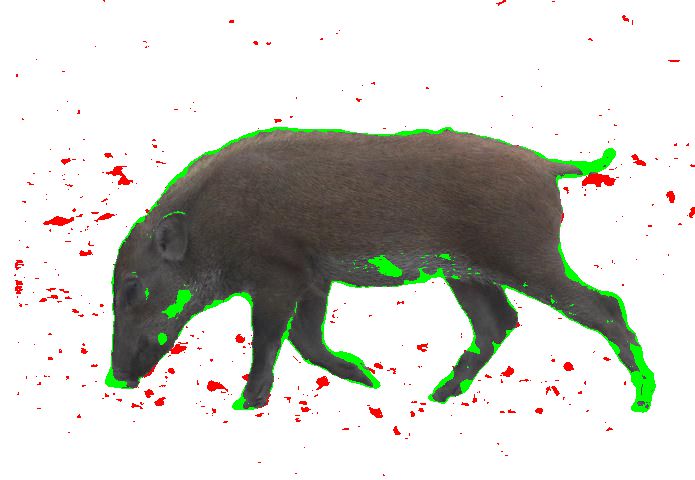}&
			\includegraphics[width=0.4\columnwidth]{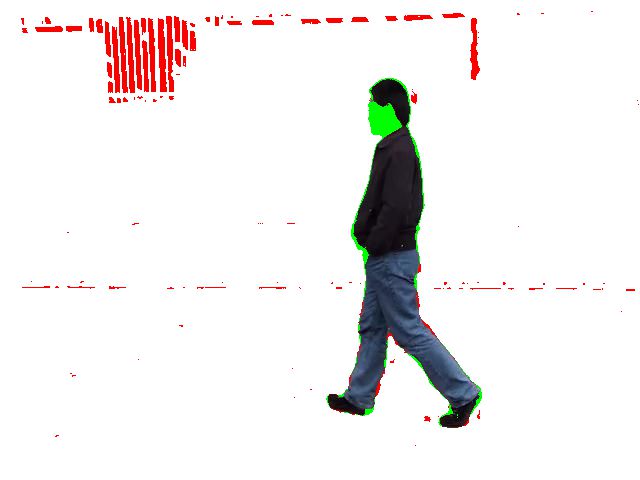}\\
			\begin{sideways}\parbox{20mm}{\centering\scriptsize GMM + GC}\end{sideways} &
			\includegraphics[width=0.4\columnwidth]{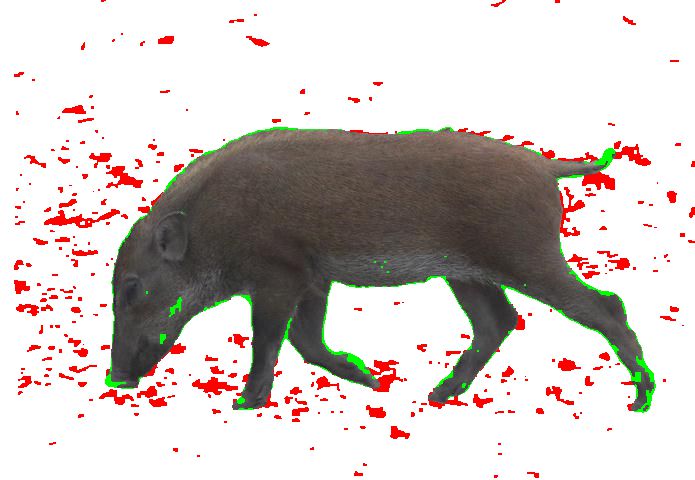}&
			\includegraphics[width=0.4\columnwidth]{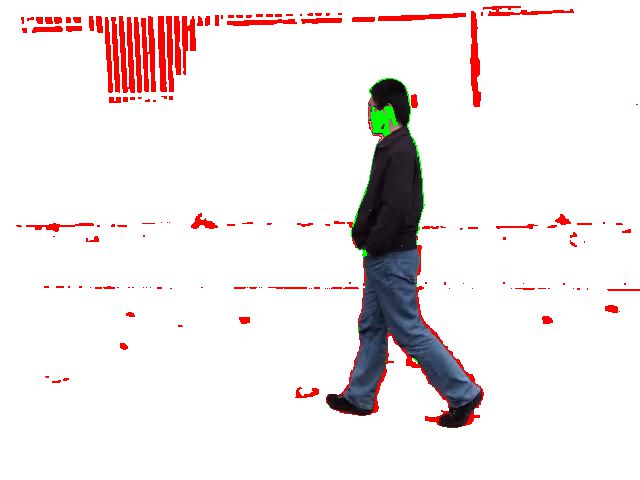}\\
			\begin{sideways}\parbox{20mm}{\centering\scriptsize GMM + 2CSSVM}\end{sideways} &
			\includegraphics[width=0.4\columnwidth]{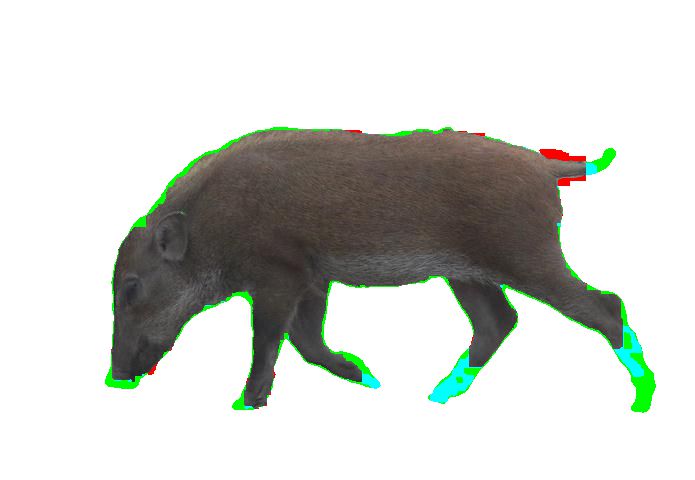}&
			\includegraphics[width=0.4\columnwidth]{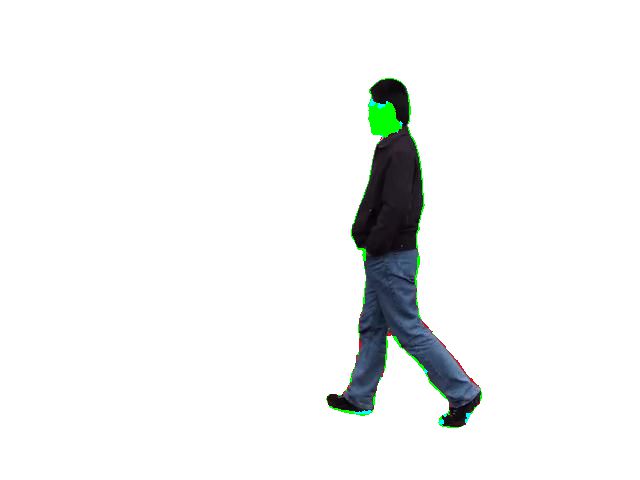}\\
			\begin{sideways}\parbox{20mm}{\centering\scriptsize GMM + OSSVM (w/o prior)}\end{sideways} &
			\includegraphics[width=0.4\columnwidth]{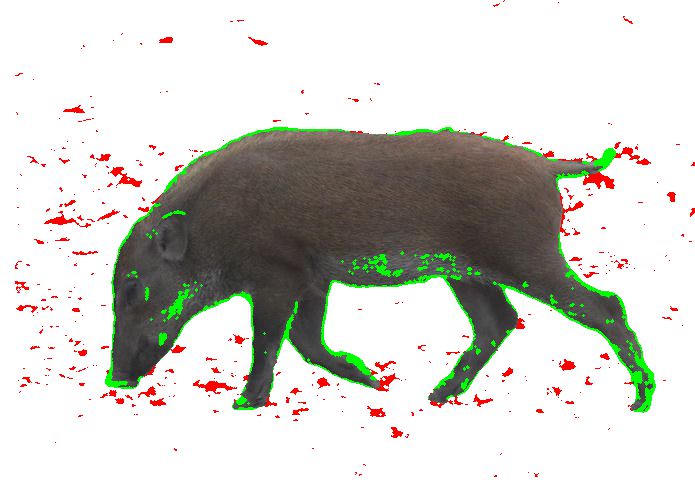}&
			\includegraphics[width=0.4\columnwidth]{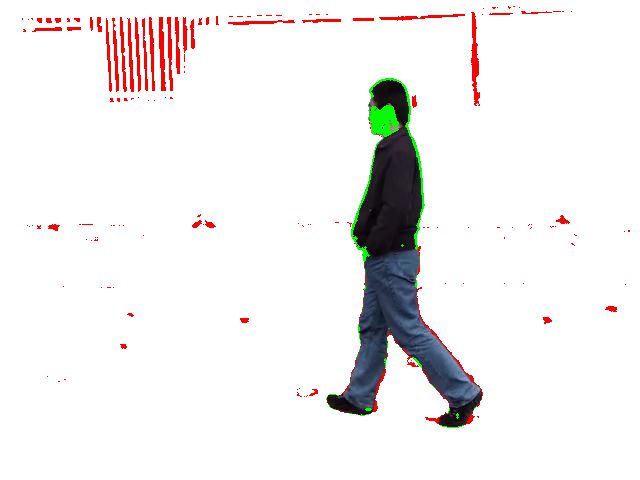}\\
			\begin{sideways}\parbox{20mm}{\centering\scriptsize GMM + OSSVM (with prior)}\end{sideways} &
			\includegraphics[width=0.4\columnwidth]{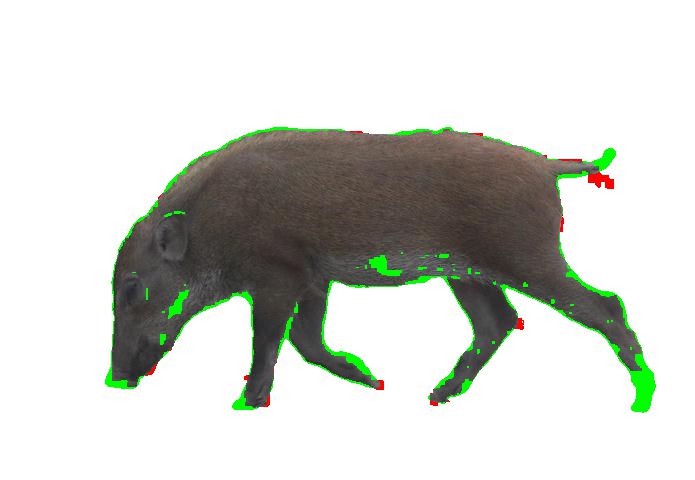}&
			\includegraphics[width=0.4\columnwidth]{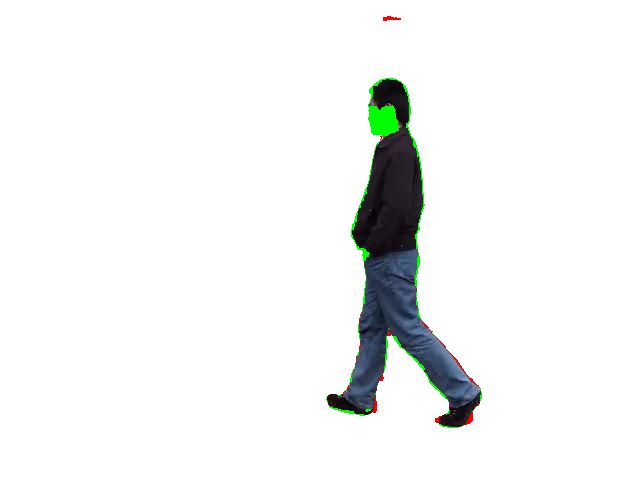}\\
		\end{tabular}
	\end{center}\vspace{-0.2cm}
	\caption{Comparison of our approach with GMM classifier + other refinement methods. The green regions are the missing foreground regions (FN) and the red regions are the unwanted background regions (FP).}\label{FIG:ExampleGMM}\vspace{-0.2cm}
\end{figure}

\begin{figure*}[!htb]
	\centering
	%\vspace{-0.5cm}
	\begin{tabular}{
			@{\hspace{0mm}}c@{\hspace{0mm}}c@{\hspace{0mm}}c @{\hspace{0mm}}c
			@{\hspace{0mm}}c@{\hspace{0mm}}c@{\hspace{0mm}}c @{\hspace{0mm}}c
			@{\hspace{0mm}}c@{\hspace{0mm}}c
		}
		\includegraphics[height = 1.8cm]{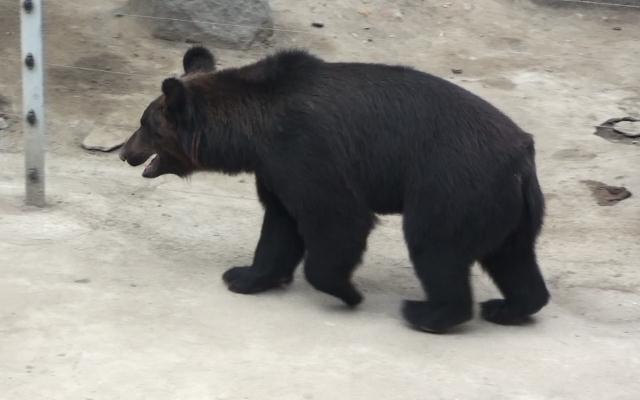}&
		\includegraphics[height = 1.8cm]{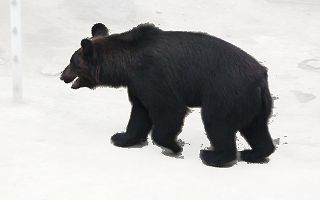}&
		\includegraphics[height = 1.8cm]{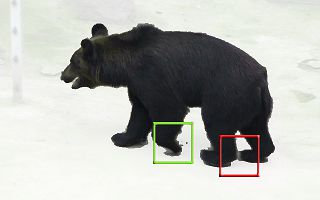}&
		\includegraphics[height = 1.8cm]{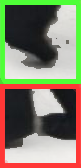}&
		\includegraphics[height = 1.8cm]{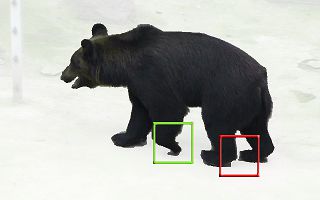}&
		\includegraphics[height = 1.8cm]{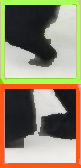}&
		\includegraphics[height = 1.8cm]{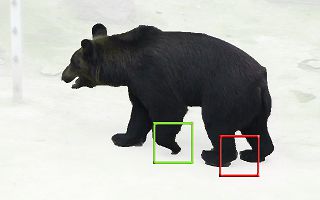}&
		\includegraphics[height = 1.8cm]{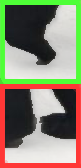}\\
		\includegraphics[height = 1.8cm]{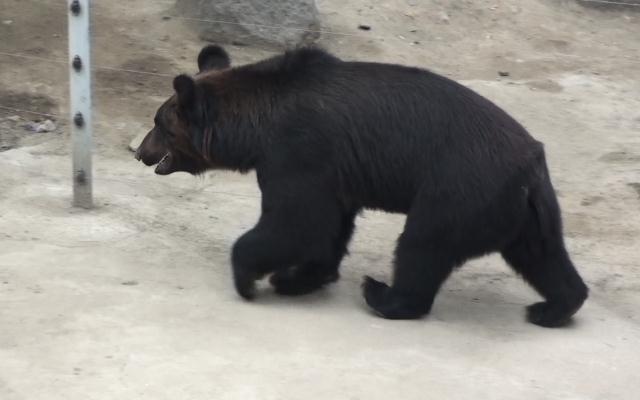}&
		\includegraphics[height = 1.8cm]{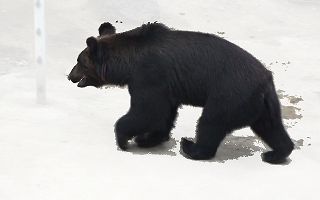}&
		\includegraphics[height = 1.8cm]{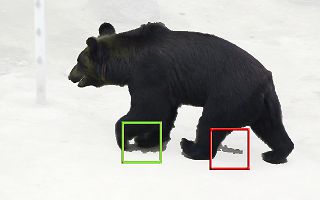}&
		\includegraphics[height = 1.8cm]{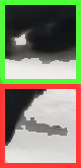}&
		\includegraphics[height = 1.8cm]{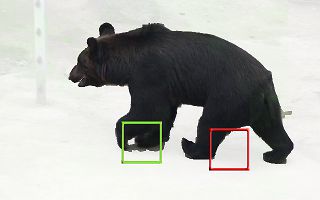}&
		\includegraphics[height = 1.8cm]{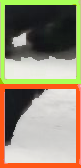}&          
		\includegraphics[height = 1.8cm]{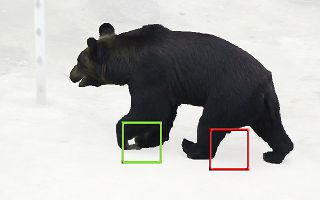}&
		\includegraphics[height = 1.8cm]{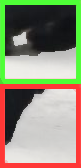}\\
		{\scriptsize Original image} & {\scriptsize FB Classifier \cite{Zhong2012UDC_SIGGRAPHAsia} + Matting} & {\scriptsize FB Classifier + UWGC} &{\scriptsize Zoom in} & {\scriptsize Our method (2CSSVM)} &{\scriptsize Zoom in} &{\scriptsize Our method (OSSVM)}& {\scriptsize Zoom in}\\\vspace{-0.8cm}
	\end{tabular}          
	\caption{Results of fully automatic segmentation propagation for the ``Bear'' sequence on frames 5 (top) and 9 (bottom), given the same keyframe segmentation. The background is whitened for visualization.}\label{FIG:ExampleSeq2}\vspace{-0.2cm}
\end{figure*}

\begin{figure*}[!htb]
	\centering
	%\vspace{-0.5cm}
	\begin{tabular}{
			@{\hspace{0mm}}c@{\hspace{0mm}}c@{\hspace{0mm}}c @{\hspace{0mm}}c
			@{\hspace{0mm}}c@{\hspace{0mm}}c@{\hspace{0mm}}c @{\hspace{0mm}}c
			@{\hspace{0mm}}c@{\hspace{0mm}}c
		}
		\includegraphics[height=1.65cm]{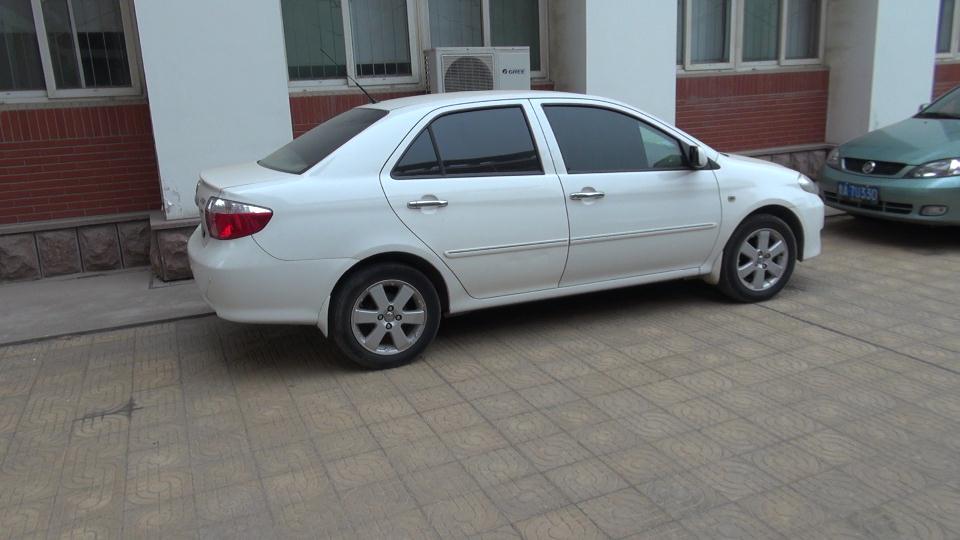}&
		\includegraphics[height=1.65cm]{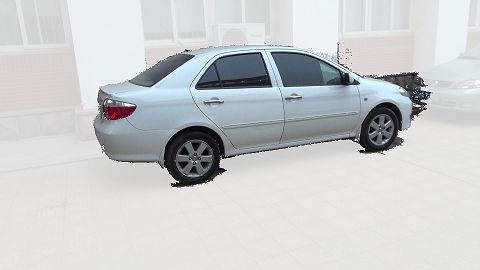}&
		\includegraphics[height=1.65cm]{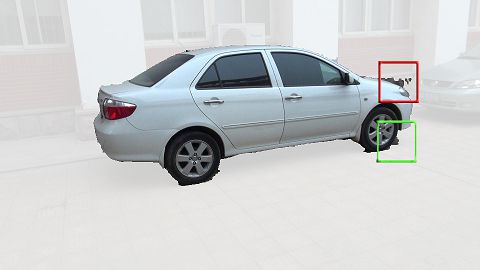}&          
		\includegraphics[height=1.65cm]{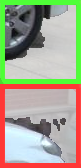}&
		\includegraphics[height=1.65cm]{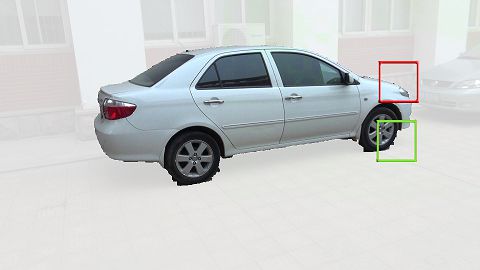}&          
		\includegraphics[height=1.65cm]{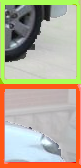}&
		\includegraphics[height=1.65cm]{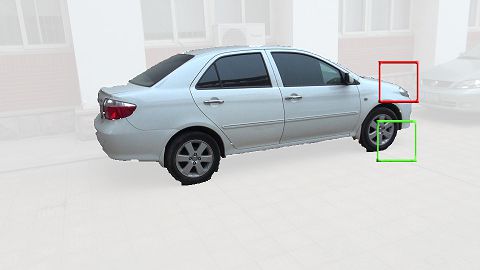}&
		\includegraphics[height=1.65cm]{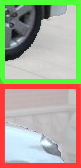}\\
		\includegraphics[height=1.65cm]{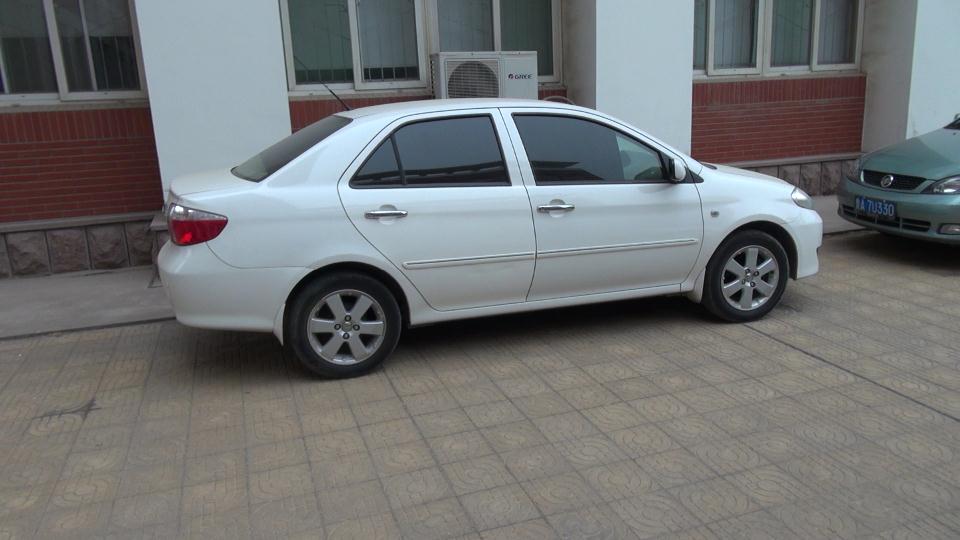}&
		\includegraphics[height=1.65cm]{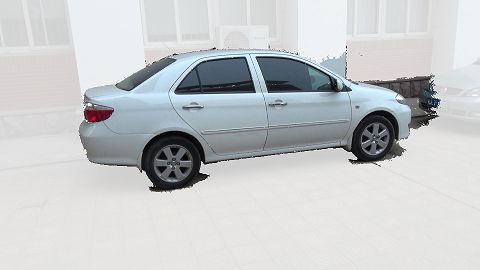}&
		\includegraphics[height=1.65cm]{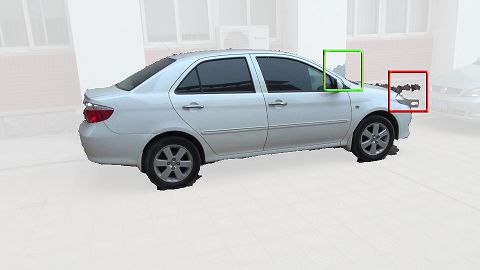}&
		\includegraphics[height=1.65cm]{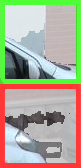}&          
		\includegraphics[height=1.65cm]{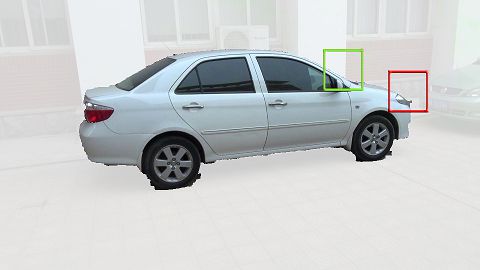}&
		\includegraphics[height=1.65cm]{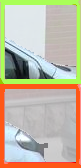}&          
		\includegraphics[height=1.65cm]{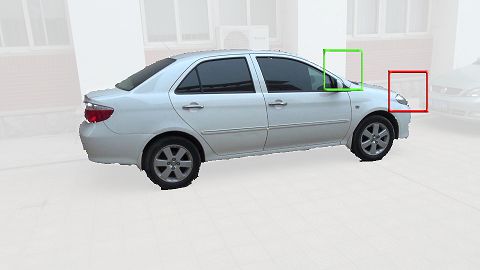}&
		\includegraphics[height=1.65cm]{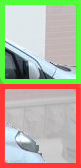}\\
		{\scriptsize Original image} & {\scriptsize FB Classifier~\cite{Zhong2012UDC_SIGGRAPHAsia} + Matting} & {\scriptsize FB Classifier + UWGC} &{\scriptsize Zoom in} &{\scriptsize Our method (2CSSVM)}& {\scriptsize Zoom in} &{\scriptsize Our method (OSSVM)}& {\scriptsize Zoom in}\\\vspace{-0.8cm}
	\end{tabular}          
	\caption{Results of fully automatic segmentation propagation for the ``Car'' sequence on frames 22 (top) and 24 (bottom), given the same keyframe segmentation. The background is whitened for visualization.}\label{FIG:ExampleSeq5}
\end{figure*}

We first apply our framework to the global Gaussian Mixture Model (GMM) classifier for segmentation rectification. Generally, a global classifier like GMM is not suitable to video cutout~\cite{Bai09VideoSnapCut_SIGGRAPH}, we conduct the experiment to validate the generality of our framework.

In the segmentation propagation, we train a GMM classifier using the ground truth segmentation on frame $t$, and apply it on frame $t+1$ as the propagated segmentation, and then apply our rectification approach with different weights, e.g., learned with or without prior as shown in Table~\ref{TB:learnedw_MRF}, for further refinement. The average boundary deviation are summarized in Figure~\ref{FIG:RstRectfyGMM}. Some typical results are shown in Figure \ref{FIG:ExampleGMM}. The results suggest that our rectification approach with learned weights can significantly improve the segmentation results generated by the GMM classifier. 
%under all the testing combination of weights. The learnt weights also work better than uniform weights which justifies our asymmetric shape distance model. 

Due to the well-known non-local behavior of GMM based classifier, it's not surprising to see that \textbf{2CSSVM} and \textbf{OSSVM with prior} significantly outperform all other cases. The experimental results shown in this subsection also imply that the errors from GMM for image cutout can be more effectively removed by using our method, compared with the conventional methods based on graph cuts and matting~\cite{Rother04GrabCut,BoykovJolly01GMM-MRF}, and the 2CSSVM outperforms the rest in this case while the OSSVM is the best alternative when computational efficiency in learning is a concern. Note that as shown in the visual results in Figure \ref{FIG:ExampleGMM}, significant errors can still be observed in the refined output, and this reasserts that global classifier such as GMM alone is not suitable for video cutout. 
\iffalse
\begin{figure}[!h]
	\centering
	% Requires \usepackage{graphicx}
	\includegraphics[width=0.8\columnwidth]{imgs/plots/GMM_plot.jpg}\\
	\caption{Effectiveness of our data-driven video cutout rectification on GMM classifier.}\label{FIG:RstRectfyGMM}
\end{figure}
\fi

It is also interesting to note that global classifiers, such as GMM, are commonly adopted in image cutout~\cite{Rother04GrabCut,BoykovJolly01GMM-MRF}. Hence, these results suggests that our method may be applied to image cutout as well.

%We perform qualitative comparisons with Video SnapCut~\cite{Bai09VideoSnapCut_SIGGRAPH} and~\cite{Zhong2012UDC_SIGGRAPHAsia}, and quantitative experiments under various conditions with~\cite{Zhong2012UDC_SIGGRAPHAsia}. 

%\noindent {\bf Rectification with reclassification on consecutive frames pairs.} We also evaluate our rectification results with reclassification which should theoretically improve the performance by resolving the shrinking bias in graph cut. Example visual results and boundary deviation are shown in Figure~\ref{FIG:ExampleSingle} and Figure~\ref{FIG:RstReclassify} respectively.

\begin{figure*}[!t]
	\centering
	\begin{tabular}{rl}
		\vspace{-0.5cm}
		\begin{sideways}\parbox{27mm}{\centering \scriptsize FB Classifier~\cite{Zhong2012UDC_SIGGRAPHAsia} + Matting}\end{sideways}&\hspace{-0.3cm}
		\subfloat{ %\includegraphics[width=0.25\textwidth]{imgs/ChineseDance/output_Imat_matrectify/FemaleSoloDancePhoenixHD_0828.jpg}
			\includegraphics[height=2.7cm]{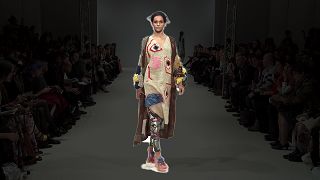}
			\includegraphics[height=2.7cm]{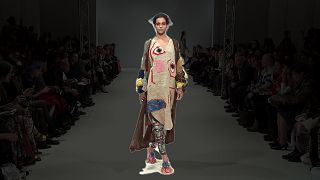}
			\includegraphics[height=2.7cm]{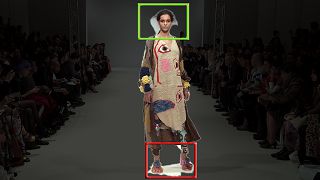}
			\includegraphics[height=2.7cm]{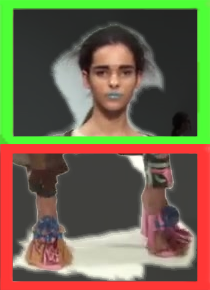}}\\ \vspace{-0.5cm}
		\begin{sideways}\parbox{27mm}{\centering \scriptsize FB Classifier + UWGC}\end{sideways}&\hspace{-0.3cm}
		\subfloat{ %\includegraphics[width=0.25\textwidth]{imgs/ChineseDance/output_Imat_matrectify/FemaleSoloDancePhoenixHD_0828.jpg}
			\includegraphics[height=2.7cm]{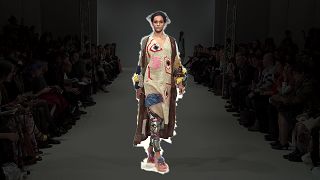}
			\includegraphics[height=2.7cm]{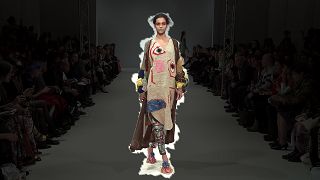}
			\includegraphics[height=2.7cm]{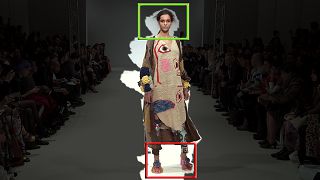}
			\includegraphics[height=2.7cm]{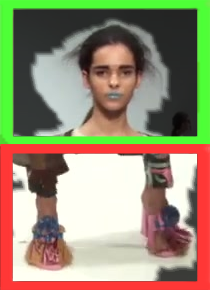}}\\ \vspace{-0.5cm}
		\begin{sideways}\parbox{27mm}{\centering  \scriptsize Adobe Rotobrush}\end{sideways}&\hspace{-0.3cm}
		\subfloat{ %\includegraphics[width=0.25\textwidth]{imgs/ChineseDance/output_Imat_rotobrush/FemaleSoloDancePhoenixHD_00000.jpg}
			\includegraphics[height=2.7cm]{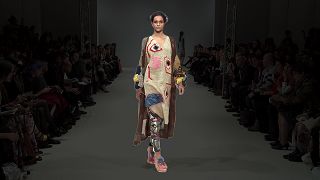}
			\includegraphics[height=2.7cm]{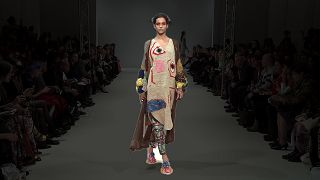}
			\includegraphics[height=2.7cm]{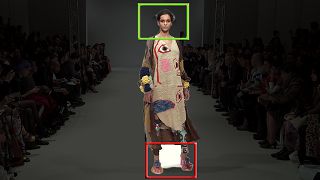}
			\includegraphics[height=2.7cm]{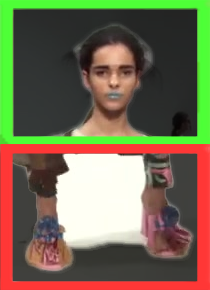}}\\\vspace{-0.5cm}
		\begin{sideways}\parbox{27mm}{\centering \scriptsize Our method (2CSSVM)}\end{sideways}&\hspace{-0.3cm}
		\subfloat{ %\includegraphics[width=0.25\textwidth]{imgs/ChineseDance/output_Imat_learnedw_reclassify/FemaleSoloDancePhoenixHD_0828.jpg}
			\includegraphics[height=2.7cm]{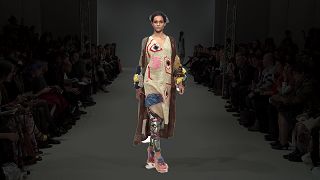}
			\includegraphics[height=2.7cm]{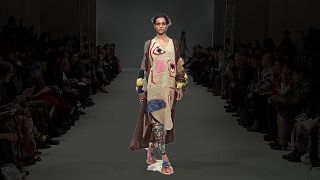}
			\includegraphics[height=2.7cm]{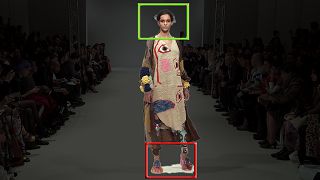}
			\includegraphics[height=2.7cm]{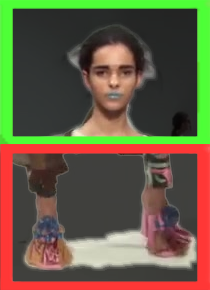}}\\
		\begin{sideways}\parbox{27mm}{\centering \scriptsize Our method (OSSVM)}\end{sideways}&\hspace{-0.3cm}
		\subfloat{ %\includegraphics[width=0.25\textwidth]{imgs/ChineseDance/output_Imat_learnedw_reclassify/FemaleSoloDancePhoenixHD_0828.jpg}
			\includegraphics[height=2.7cm]{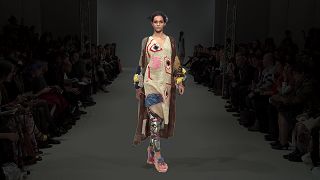}
			\includegraphics[height=2.7cm]{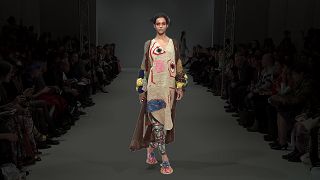}
			\includegraphics[height=2.7cm]{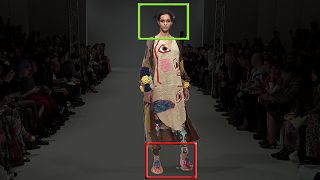}
			\includegraphics[height=2.7cm]{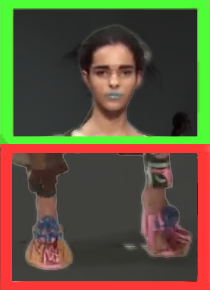}}\\ 
	\end{tabular}\vspace{-0.3cm}          
	\caption{Results of fully automatic segmentation propagation for the ``Catwalk'' sequence on frames 1 (left), 6 (middle) and 19 (right), given keyframe segmentation. The background is darkened for visualization.}\label{FIG:ExampleSeq7}
\end{figure*}

\begin{figure*}[!t]
	\centering
	\begin{tabular}{rl}
		\vspace{-0.5cm}
		\begin{sideways}\parbox{28mm}{\centering\scriptsize FB Classifier~\cite{Zhong2012UDC_SIGGRAPHAsia} + Matting }\end{sideways}&\hspace{-0.3cm}
		\subfloat{ %\includegraphics[width=0.25\textwidth]{imgs/ChineseDance/output_Imat_matrectify/FemaleSoloDancePhoenixHD_0828.jpg}
			\includegraphics[height=2.75cm]{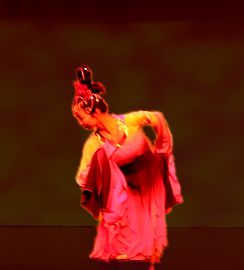}
			\includegraphics[height=2.75cm]{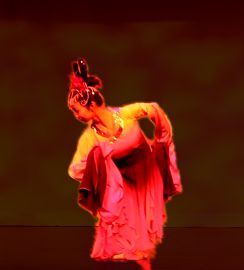}
			\includegraphics[height=2.75cm]{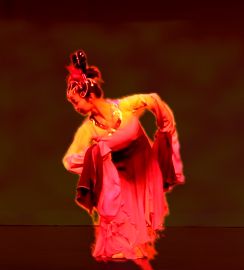}
			\includegraphics[height=2.75cm]{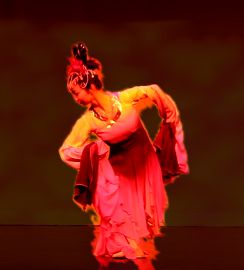}
			\includegraphics[height=2.75cm]{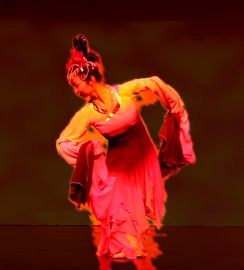}
			\includegraphics[height=2.75cm]{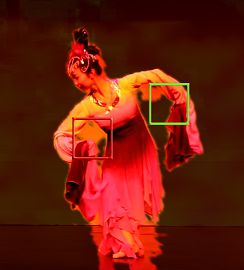}
			\includegraphics[height=2.75cm]{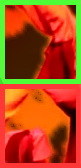}}\\ \vspace{-0.5cm}
		\begin{sideways}\parbox{25mm}{\centering\scriptsize FB-C + UWGC}\end{sideways}&\hspace{-0.3cm}
		\subfloat{ %\includegraphics[width=0.125\textwidth]{imgs/ChineseDance/output_Imat_uniformw/FemaleSoloDancePhoenixHD_0828.jpg}
			\includegraphics[height=2.75cm]{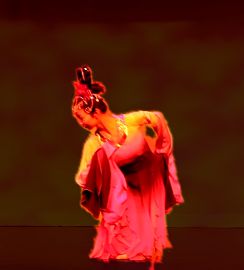}
			\includegraphics[height=2.75cm]{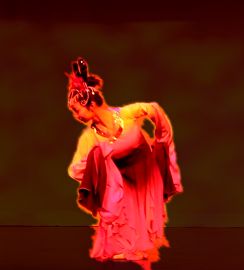}
			\includegraphics[height=2.75cm]{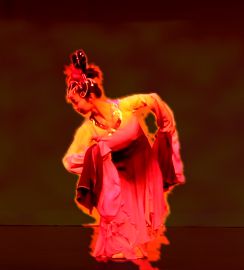}
			\includegraphics[height=2.75cm]{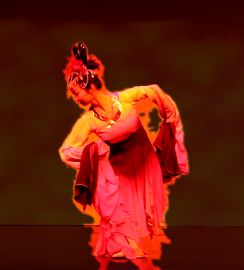}
			\includegraphics[height=2.75cm]{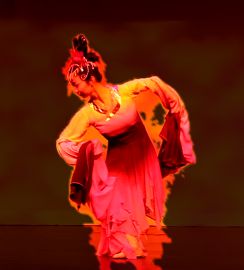}
			\includegraphics[height=2.75cm]{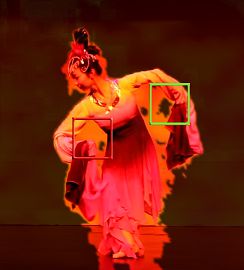}
			\includegraphics[height=2.75cm]{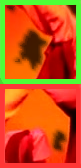}}\\ \vspace{-0.5cm}
		\begin{sideways}\parbox{25mm}{\centering \scriptsize Adobe Rotobrush}\end{sideways}&\hspace{-0.3cm}
		\subfloat{ %\includegraphics[width=0.125\textwidth]{imgs/ChineseDance/output_Imat_rotobrush/FemaleSoloDancePhoenixHD_00000.jpg}
			\includegraphics[height=2.75cm]{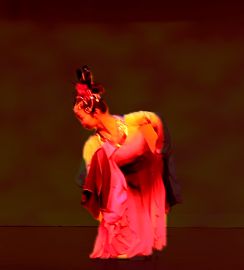}
			\includegraphics[height=2.75cm]{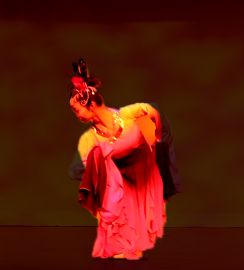}
			\includegraphics[height=2.75cm]{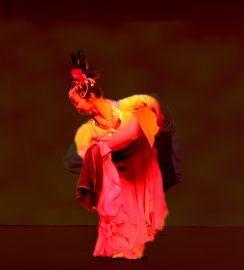}
			\includegraphics[height=2.75cm]{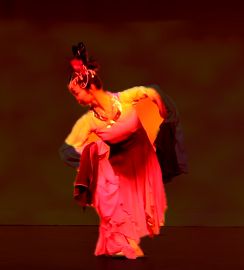}
			\includegraphics[height=2.75cm]{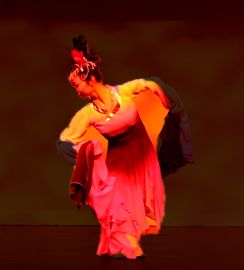}
			\includegraphics[height=2.75cm]{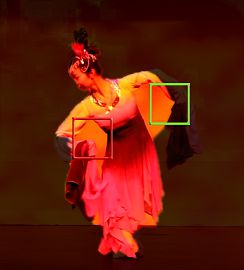}
			\includegraphics[height=2.75cm]{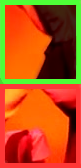}}
		\\\vspace{-0.5cm}
		\begin{sideways}\parbox{25mm}{\centering\scriptsize Our method (2CSSVM)}\end{sideways}&\hspace{-0.3cm}
		\subfloat{ %\includegraphics[width=0.125\textwidth]{imgs/ChineseDance/output_Imat_learnedw_reclassify/FemaleSoloDancePhoenixHD_0828.jpg}
			\includegraphics[height=2.75cm]{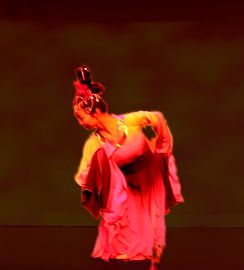}
			\includegraphics[height=2.75cm]{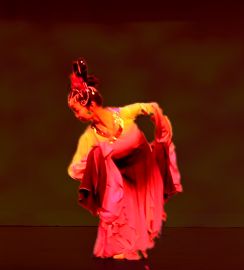}
			\includegraphics[height=2.75cm]{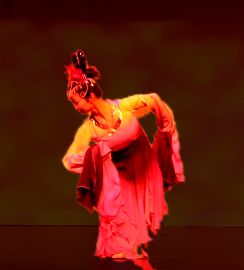}
			\includegraphics[height=2.75cm]{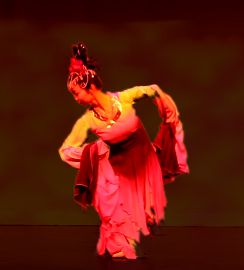}
			\includegraphics[height=2.75cm]{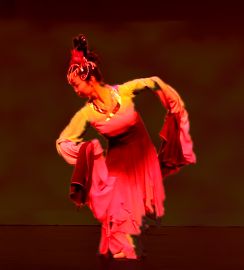}
			\includegraphics[height=2.75cm]{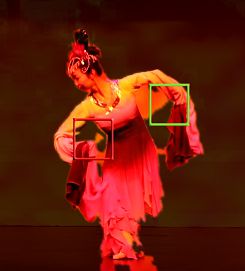}
			\includegraphics[height=2.75cm]{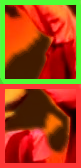}}\\
		\begin{sideways}\parbox{25mm}{\centering\scriptsize Our method (OSSVM)}\end{sideways}&\hspace{-0.3cm}
		\subfloat{ %\includegraphics[width=0.125\textwidth]{imgs/ChineseDance/output_Imat_learnedw_reclassify/FemaleSoloDancePhoenixHD_0828.jpg}
			\includegraphics[height=2.75cm]{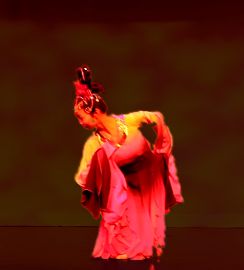}
			\includegraphics[height=2.75cm]{imgs/ChineseDance/output_Imat_learnedw_reclassify/FemaleSoloDancePhoenixHD_0833.jpg}
			\includegraphics[height=2.75cm]{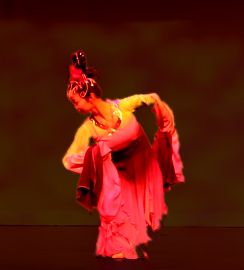}
			\includegraphics[height=2.75cm]{imgs/ChineseDance/output_Imat_learnedw_reclassify/FemaleSoloDancePhoenixHD_0835.jpg}
			\includegraphics[height=2.75cm]{imgs/ChineseDance/output_Imat_learnedw_reclassify/FemaleSoloDancePhoenixHD_0836.jpg}
			\includegraphics[height=2.75cm]{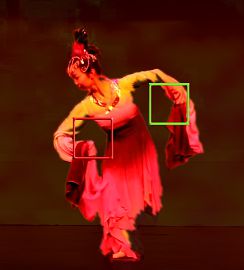}
			\includegraphics[height=2.75cm]{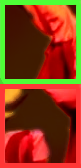}}\\ %\vspace{-0.5cm}
	\end{tabular}\vspace{-0.2cm} 
	\caption{Results of automatic segmentation propagation for the ``Chinese dance'' sequence on frames 32 to 37 (left-right), given the same keyframe segmentation. The background is darkened for visualization.}\label{FIG:ExampleSeq6}
\end{figure*}

\subsection{Rectifying local classifier}

We further demonstrate the efficacy of our approach for rectifying the state-of-the-art local classifier proposed in~\cite{Zhong2012UDC_SIGGRAPHAsia}. We perform the full sequence propagation on the {Test set} and Figure~\ref{FIG:Rst_SeqPropag2} shows the quantitative results. Again, we see the segmentation errors by matting and graph cuts with uniform weights accumulates more rapidly than ours. After 10 frames of propagation, matting and graph cut generates 4 times and 2 times more error than our method respectively. We also present the visual results for a ``Car'' sequence in the Test set in Figure~\ref{FIG:ExampleSeq5}, and a ``Bear'' sequence from the Training set in Figure~\ref{FIG:ExampleSeq2}. 

We also compare our method with Video SnapCut (Rotobrush)~\cite{Bai09VideoSnapCut_SIGGRAPH} and~\cite{Zhong2012UDC_SIGGRAPHAsia} on additional video sequences. Some of the typical results are shown in Figures \ref{FIG:ExampleSeq7} and \ref{FIG:ExampleSeq6}. We can observe that the Rotobrush could suffer from the temporal discontinuity while segmentation error could easily accumulate in other methods based on Zhong et al.'s classifier. 
\begin{figure}[!t]
	\centering
	% Requires \usepackage{graphicx}
	%\includegraphics[width=0.8\columnwidth]{imgs/plots/plot_fullsequence2.jpg}\\
	\includegraphics[width=0.95\columnwidth]{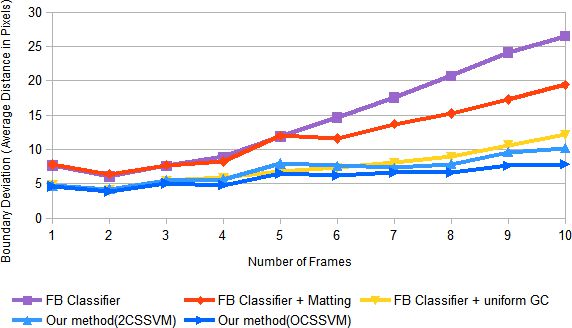}
	\caption{Error accumulation in segmentation propagation over full sequences on Test set without additional user interaction.  }\label{FIG:Rst_SeqPropag2}\vspace{-0.5cm}
\end{figure}

\subsection{Experiment on RGB-D videos}
3D movies and videos have now gained increased popularity. An extra depth channel in addition to the RGB channels may be handily available sooner or later. Since our OSSVM framework can be easily extended to handling the extra depth dimension, we also apply our method to RGB-D data for evaluation. The segmentation rectification model for RGB-D data is the same as Eq. (\ref{EQ:Bi-MRF_linear}), except that the potential is defined as
\begin{equation}\label{EQ:DefinePsiRGBD}
\begin{split}
\mathbf{\Psi}_{RGBD} = \left(\begin{array}{l}
\sum_{pp'}w^{eRGB}_{pp'} \big|f^{t}_p-{f^{t}_{p'}}\big|,\{p,p'\}\in\mathcal{N} \\
\sum_{pp'}w^{eD}_{pp'} \big|f^{t}_p-{f^{t}_{p'}}\big|,\{p,p'\}\in\mathcal{N}\\
\sum_{pq}(1-h^t_q)f^{t}_p,\{p,q\}\in\mathcal{N}_{fh} \\
\sum_{pq}h^t_q(1-f^{t}_p),\{p,q\}\in\mathcal{N}_{fh}
\end{array}\right),
\end{split}
\end{equation}
\begin{equation}\label{EQ:weqRGBD}
w^{eD}_{pp'}=\left\{\begin{array}{lr}
\exp(-5I_{eD}(p,p')),~ & I_{eD}(p,p')\neq0 \\
20,~ & \hbox{Otherwise}
\end{array}
\right.,
\end{equation}
where $I_{eD}$ is the edge map from depth channel, $w^{eRGB}_{pp'}=w^e_{pp'}$ has been defined in Eq. (\ref{EQ:weq}).

Similar to the RGB case, the OSSVM model for RGB-D is
\begin{equation}\label{EQ:OSSVM_RGBD_p}
\begin{split}
\min_{\mathbf{w},\vec{\varepsilon}}~& {1\over2}\|\mathbf{w}\|^2 + {C\over N}\left(\sum_{k=1}^N \varepsilon_k\right) - w^{edge}_{RGB} \\
\hbox{s.t.:}~&  \forall k,~ \mathbf{w}\cdot\mathbf{\Psi}_{RGBD}(\mathbf{x}_k,\mathbf{f}^*_k)\leq -1+\varepsilon_k, \\
&  w^{edge}_{RGB}\leq w^{edge}_{D}, \\            
&  \sum_i w_i=1, \mathbf{w}\geq 0, \\
\end{split}
\end{equation}
in which we added one additional constraint to the model to represent our prior that the edge term from depth is more reliable than that from RGB values and this reformulation does not require additional free parameters. 
\begin{table*}
	\renewcommand{\arraystretch}{1.2}
	\centering
	\caption{Learned weights for RGB-D data}\label{TB:learnedw_RGBD}\vspace{-2mm}
	\small{ \begin{tabular*}{\textwidth}{@{\extracolsep{\fill}}c|cc|cccccccccc }
			\toprule
			% after \\: \hline or \cline{col1-col2} \cline{col3-col4} ...
			{\bf w} 				& $w_1$ & $w_2$ & $w_3$ & $w_4$ & $w_5$ & $w_6$ & $w_7$ & $w_8$ & $w_9$ & $w_{10}$ & $w_{11}$ & $w_{12}$\\ \hline 
			 2CSSVM &0.1& 0.044 &  0.084 &   0.09  &  0.087   & 0.088  &  0.089  &   0.08   & 0.085  &  0.082  &  0.083 &   0.084\\
			 OSSVM with prior = 0.5 & 0.12  &   0.19  &  0.059  &  0.071  &  0.063  &  0.069  &  0.071   & 0.064 &   0.076  &  0.074 &   0.067  &  0.077\\
			\bottomrule  
\end{tabular*}}\vspace{-0.2cm}
\begin{flushleft}
	{\scriptsize where $\mathbf{w} = [w_1,w_2,...,w_{12}]^T = [w^{edge}_{RGB},~w^{edge}_{D},~  w^{inside}_{p,q_0},~  w^{inside}_{p,q_1},~  w^{inside}_{p,q_2},~  w^{inside}_{p,q_3},~  w^{inside}_{p,q_4},~  w^{outside}_{p,q_0},~  w^{outside}_{p,q_1},~  w^{outside}_{p,q_2},~  w^{outside}_{p,q_3},~  w^{outside}_{p,q_4}]^T$}
\end{flushleft}
\def\arraystretch{1}
\end{table*}

\begin{table*}
	\renewcommand{\arraystretch}{1.2}
	\centering
	\caption{Average boundary deviation (ABD) for 10 frame segmentation propagation}\label{TB:ABD_RGBD}\vspace{-2mm}
	\small{ \begin{tabular*}{\textwidth}{@{\extracolsep{\fill}}c|cccccc }
			\toprule
			% after \\: \hline or \cline{col1-col2} \cline{col3-col4} ...
			  \multirow{2}{*}{\bf Method} 	&Zhong et al's &+Matting & +Uniform GC& +OCSSVM+prior  &+2CSSVM  &OCSSVM+prior\\ 
			&Classifier & in RGB & in RGBD & in RGB & in RGBD & in RGBD\\\hline 
			{\bf ABD(pixel)} & 11.01& 12.73&3.61&3.35&3.07& \textbf{2.92}
\\			\bottomrule  
		\end{tabular*}}\vspace{-0.2cm}
	\def\arraystretch{1}
\end{table*}

\begin{figure*}[!t]
	\centering
	\includegraphics[width=0.95\linewidth]{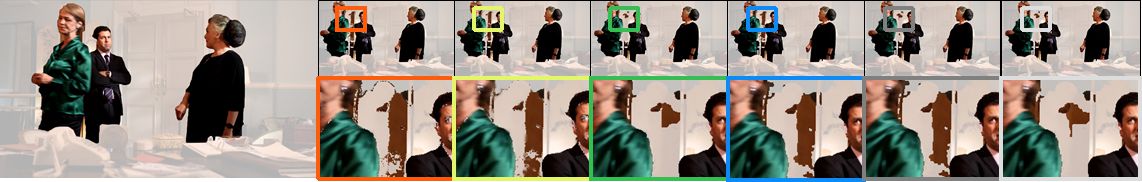}\\
	\includegraphics[width=0.95\linewidth]{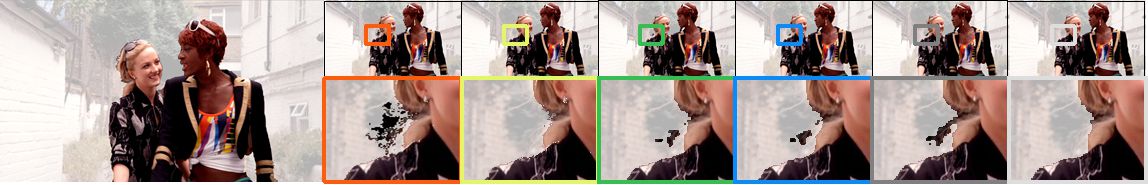}
	\caption{Segmentation results on two RGB-D sequences (zoom in to see details). In each example, the left most is the initial keyframe segmentation. From the second left to the end are the results by Zhong et al.'s classifier \cite{Zhong2012UDC_SIGGRAPHAsia}, Classifier with Matting, OSSVM with RGB only, uniform weights for GC on RGB-D, 2CSSVM for RGB-D and OSSVM for RGB-D. The top row shows the result on the 5th frame from the keyframe, and the bottom row shows the zoom-in for the boxed regions in the top row.}\label{FIG:SeqPropagRGBD}\vspace{-0.5cm}
\end{figure*}

The dataset we used is from the INRIA 3D movie dataset \cite{Seguin153DMovie}. Since a number of sequences in the original dataset contain very dark or motion-blurred objects, we select a subset containing 22 sequences with identifiable object boundaries in RGB domain for this experiment. Note that visually identifiable boundaries are required in the context of video cutout and for ground truth delineation. There are a total of 835 frames in the selected subset. Besides, the original dataset only provides the keyframe segmentations. Therefore, we manually cutout each frame. 

The visual comparison of results from the related methods are shown in Figure \ref{FIG:SeqPropagRGBD}. From the visual results, we can observe that the results are comparable and our OSSVM on RGB-D outperforms others in general. The quantitative results are shown in Figure \ref{FIG:Rst_SeqPropagRGBD} and Table \ref{TB:ABD_RGBD}, which further validated our observation. The first impression is that the overall errors are only around 3 pixels small for many of the methods up to 10 subsequent frames. Besides, we see that our OSSVM on RGB-D and 2CSSVM on RGB-D are very comparable and OSSVM is still better than the other methods. 

The comparable performance is due to that the object/background motion in the videos in this dataset are often either very small or abrupt, and the method may perform equally good or bad on most of them. The RGB-D video cutout system would further benefit from a dedicated RGB-D object classifier which is beyound the scope of this paper.

\begin{figure}[!t]
\centering
\includegraphics[width=0.95\columnwidth]{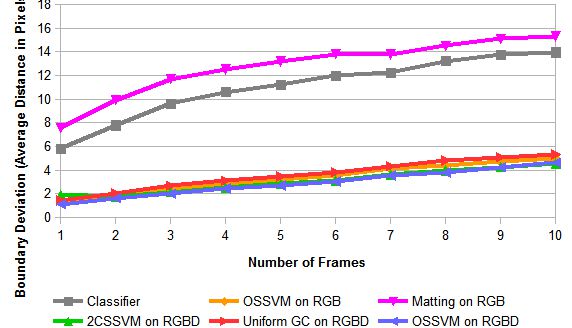}
\caption{Error accumulation in segmentation propagation over RGB-D sequences without additional user interaction. }\label{FIG:Rst_SeqPropagRGBD}\vspace{-0.5cm}
\end{figure}

\subsection{Cross validation for prior weight selection}
There is an optional edge prior in our main formulation of OSSVM. The default prior value, which is the coefficient of $-w^{edge}$, is 1. From our experiment, we observe that this prior is crucial to the performance. To select the optimal weights, we perform 10-fold cross validation with 7 possible prior weights on a uniform grid $\{0,0.5,1,1.5,2,2.5,3\}$, for both of the RGB and RGB-D datasets used in our experiments. The best prior is the one gives overall smallest error in all the 10-fold cross validation. 

The quantitative results are shown in Figure \ref{FIG:CrossValid}. We can observe that the error in the 10-fold experiments becomes smallest for Zhong et al's dataset if the prior is 1 and the error becomes smallest for the RGB-D dataset if the prior is 0.5. When the prior on the edge weight is too high, the weights on the data term may be too small to produce semantically meaningful results. 
\begin{figure}[!t]
 	\centering
 	\includegraphics[width=0.95\columnwidth]{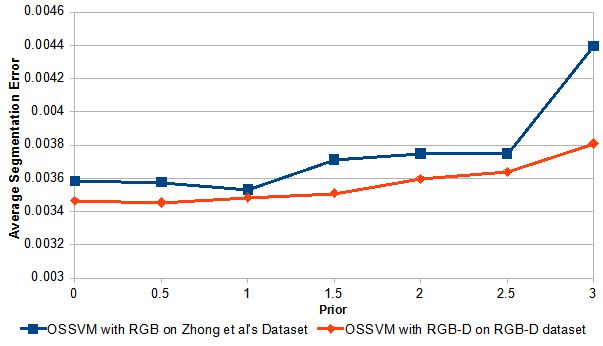}
 	\caption{Cross validation for edge prior weight selection. The average segmenation error is defined as the ratio of wrongly segmented pixels against the total number of pixels.}\label{FIG:CrossValid}
 \end{figure}

\subsection{Limitations}\label{sec:discussion}
Although experiments show that our rectification approach can effectively improve the performance of existing video cutout systems, it may fail in challenging cases. A typical failure case is imperfect edge extraction. State-of-the art edge detection techniques are often reliable, but their results may still contain errors. Such errors may impair the segmentation rectification. See Figure~\ref{FIG:SpuriousEdge} for such an example. The edge map shows that there is a clear strong edge caused by the shadow in-between the legs, causing segmentation rectification to be less effective in this region. %Shadow removal or detection may be incorporated to alleviate this problem.

Another common problem for segmentation propagation is having abrupt changes on the object itself, especially the abrupt emergence of object parts. Figure~\ref{FIG:SuddenParts} shows a typical example. Apart from more user interactions, we believe this problem can be solved by using a more sophisticated propagation model, such as a long-term shape prior.

\begin{figure}[!h]
	\centering
	\subfloat{\includegraphics[width=0.33\columnwidth]{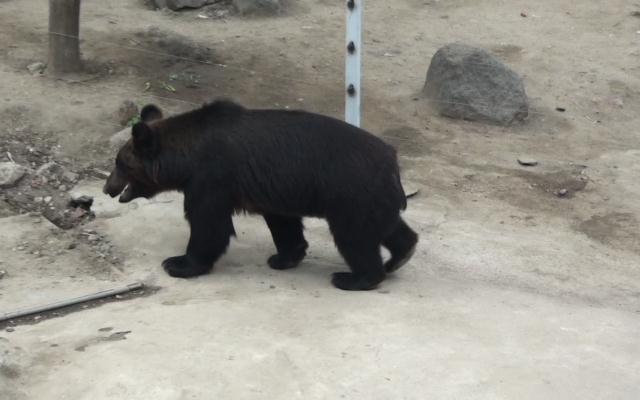}
		\includegraphics[width=0.33\columnwidth]{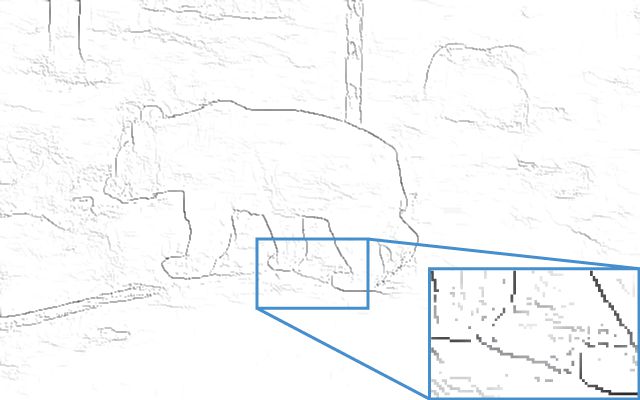}
		\includegraphics[width=0.33\columnwidth]{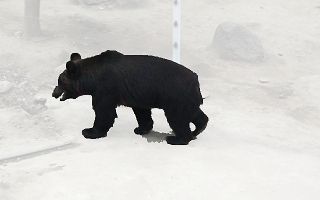}}\\\vspace{-2mm}
	\caption{Examples of failure cases I: spurious edges.}\label{FIG:SpuriousEdge}\vspace{-3mm}
	\centering
	\subfloat{\includegraphics[width=0.33\columnwidth]{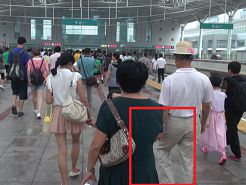}
		\includegraphics[width=0.33\columnwidth]{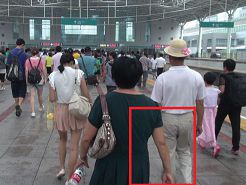}
		\includegraphics[width=0.33\columnwidth]{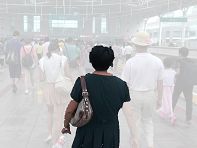}}\\\vspace{-2mm}
	\caption{Examples of failure cases II: abrupt emergence of object parts.}\label{FIG:SuddenParts}
\end{figure}

% Video segmentation
% VSB100 dataset

\section{Conclusion and future work}\label{sec:conclusion}
We propose a novel generic approach to automatically rectify the propagated segmentation in video cutout systems. The core idea of our work is to incorporate a generic shape distance measure in a bilayer MRF framework learned from data to remove the intrinsic bias of the classifier in the segmentation propagation step. This work is motivated by our observation that different classifiers bias toward FP and FN differently, but they were treated equally in the previous video cutout systems. We found that FP and FN can be treated differently in our bilayer MRF, and the optimal form of the MRF can be learned from the data. Extensive evaluation demonstrates that our approach can significantly improve the state-of-the-art video cutout systems in segmentation accuracy.  

There are several vision tasks related to the interactive video cutout problem, such as~\cite{Xiao2005motionseg,lee2011key,MinglunGong2011VideoSeg,Papazoglou2013fastVseg,chen2013deepshapeprior,Jain2014supervoxelVseg,khoreva2015classifierVseg}. In computer vision, those problems can be thought of as shape tracking problem, and errors are generally tolerable. Convenient user interaction is also not a concern to them. In contrast, the interactive video cutout problem does not tolerate visible errors in the segmentation and user-friendly interaction is a crucial concern. It has been proven that the state-of-the-art video cutout frameworks are particularly suitable for the video cutout problem. In most of the works in vision, global optimization frameworks for whole sequences are often adopted. It has been noted in~\cite{Bai09VideoSnapCut_SIGGRAPH} that localized optimization allows user to have better control of the video cutout process. Our method is proposed dedicatedly for video cutout. Its extension for generic video segmentation tasks has yet to be explored.

\section*{Appendix}

\begin{proof}[Proof of Theorem \ref{THM:2COEQ}]
	We first substitute the two conditions stated in the theorem into the constraint in Eq. (\ref{EQ:2C_SSVM1}) and we obtain	
	\begin{equation}
	\begin{split}
	&\mathbf{w}\cdot(\mathbf{\Psi}(\mathbf{x}_k,\mathbf{f}_k)-\mathbf{\Psi}(\mathbf{x}_k,\mathbf{f}^*_k))\geq  \Delta_k - \xi_k\\
	\Leftrightarrow&\mathbf{w}\cdot(\mathbf{b}_k-\mathbf{\Psi}(\mathbf{x}_k,\mathbf{f}^*_k))\geq  1 - \xi_k\\
	\Leftrightarrow&\mathbf{w}\cdot\mathbf{\Psi}(\mathbf{x}_k,\mathbf{f}_k^*)\leq  -1 +\hat{\xi}_k\\
	\end{split}
	\end{equation}
	where have applied the normalization constraint $\sum_i w_i=1$ and we set $\hat{\xi}_k = {B}_k+ \xi_k$, ${B}_k=\sum_j[\mathbf{b}_k]_j$. The above gives us the constraint in the OSSVM formulation in Eq. (\ref{EQ:OSSVM}).
	
	The objective function in Eq. (\ref{EQ:2C_SSVM1}) can accordingly be rewritten as
	\begin{equation}
	\begin{split}
	&{1\over2}\|\mathbf{w}\|^2 + {C\over N}\sum_{k=1}^N (\hat{\xi}_k-{B}_k)\\
	=& {1\over2}\|\mathbf{w}\|^2 + {C\over N}\sum_{k=1}^N \hat{\xi}_k - {C\over N}\sum_{k=1}^N{B}_k\\
	\end{split}
	\end{equation}
	where the last term is a constant. By omitting the constant we obtain the objective function in OSSVM  Eq. (\ref{EQ:OSSVM}). 
	
	Lastly, due to the one-to-one correspondence between $\hat{\xi}_k$ and $\xi_k$ for any $k$, we know that optimizing over $\mathbf{w}$ and $\{\xi_k\}$ is equivalent to optimizing over $\mathbf{w}$ and $\{\hat{\xi}_k\}$, which completes the proof. \qed
\end{proof}
%Additionally, we require the weight for the edge cue to be as large as possible, since the edge cue is almost always valid. This prior can be easily incorporated in the above quadratic programming as a linear term to arrive at:
%\begin{equation}\label{EQ:OSSVM_wp}
%\begin{split}
%\min_{\mathbf{w},\beta,\vec{\hat{\xi}}}~& {1\over2}\|\mathbf{w}\|^2 + {C\over N}\sum_{k=1}^N (\hat{\xi}_k-\beta)-w^{edge}  \\
%\hbox{s.t.:}~&  \forall k,~ \mathbf{w}\cdot\mathbf{\Phi}(\mathbf{x}_k,f^*_k)\geq \beta-\hat{\xi}_k \\
%             &  \forall k,~\beta-{\xi}_k\geq 0 \\
%             &  \sum_i w_i=1, \mathbf{w}\geq 0
%\end{split}
%\end{equation}
%where the last term in the objective function is our prior. An experimental validation of this prior is presented in the experiment section.

%\section*{Acknowledgment}
%We thank Michael S. Brown for narrating the video. We thank Fan Zhong at Shandong University for sharing his program for FB classifier. During Wang's visit to MIT, Yichang Shih and Prof. Wojciech Matusik had offered help and suggestions on the manuscript. Noah Duncan helped proofread the paper. 
%
%Part of this project was done when Junyan Wang was supported by SUTD-ZJU Collaboration Research Grant 2012 SUTDZJU/RES/03/2012. 
%
%Sai-Kit Yeung is supported by SUTD-ZJU Collaboration Research Grant 2012 SUTDZJU/RES/03/2012, SUTD-MIT International Design Center Grant IDG31300106, and Singapore MOE Academic Research Fund MOE2013-T2-1-159. 

%\balance
{
	\bibliographystyle{ieeetr}
	\bibliography{VideoSeg,Matching,MRFSeg,LevelSetActiveContours,OtherSeg,ml}
}

% that's all folks
\end{document}